\newcommand{\change}{\Delta}
\newcommand{\changeseq}{\boldsymbol{\change}}
\newcommand{\C}{\mathbb C}
\newcommand{\X}{\mathcal X}
\renewcommand{\H}{\mathcal H}
\renewcommand{\P}{\mathbb P}
\newcommand{\Px}{\mathcal{P}}
\newcommand{\target}{h^{*}}
\newcommand{\targetseq}{\mathbf{h^{*}}}
\newcommand{\E}{\mathbb E}
\newcommand{\nats}{\mathbb{N}}
\newcommand{\reals}{\mathbb{R}}
\newcommand{\poly}{{\rm poly}}
\newcommand{\Log}{{\rm Log}}
\newcommand{\alg}{\mathcal A}
\newcommand{\DIS}{{\rm DIS}}
\newcommand{\Ball}{{\rm B}}
\newcommand{\er}{{\rm er}}
\newcommand{\ind}{\mathbbm{1}}
\newcommand{\vc}{d}
\newcommand{\sign}{{\rm sign}}
\newcommand{\dc}{\theta}
\newcommand{\argmin}{\mathop{\rm argmin}}
\newsavebox{\savepar}
\newenvironment{bigboxit}{\begin{center}\begin{lrbox}{\savepar}
\begin{minipage}[h]{4.6in}
\normalfont
\begin{flushleft}}
{\end{flushleft}\end{minipage}\end{lrbox}\fbox{\usebox{\savepar}}
\end{center}}
\newcommand{\citet}{\cite}
\newcommand{\citep}{\cite}
\newcommand{\citealp}{\cite}
\begin{document}
\mainmatter 

\title{Learning with a Drifting Target Concept}

\titlerunning{Learning with a Drifting Target Concept}

\author{Steve Hanneke \and Varun Kanade \and Liu Yang}

\authorrunning{Steve Hanneke, Varun Kanade, and Liu Yang}

\institute{Princeton, NJ USA.\\ 
\email{steve.hanneke@gmail.com}
\and
D\'{e}partement d'informatique, \'{E}cole normale sup\'{e}rieure, Paris, France.\\
\email{varun.kanade@ens.fr}
\and
IBM T.J. Watson Research Center, Yorktown Heights, NY USA.\\
\email{yangli@us.ibm.com}
}

\maketitle

\begin{abstract}
We study the problem of learning in the presence of a drifting target concept. Specifically,
we provide bounds on the error rate at a given time, given a learner with access to a history
of independent samples labeled according to a target concept that can change on each round. 
One of our main contributions is a refinement of the best previous results for 
polynomial-time algorithms for the space of linear separators under a uniform distribution.
We also provide general results for an algorithm capable of adapting to a variable rate of drift
of the target concept.
Some of the results also describe an active learning variant of this setting, and provide bounds on the
number of queries for the labels of points in the sequence sufficient to obtain the stated bounds
on the error rates.
\end{abstract}

\section{Introduction}

Much of the work on statistical learning has focused on 
learning settings in which the concept to be learned is static 
over time.
However, there are many application areas where this is not 
the case.  For instance, in the problem of face recognition, 
the concept to be learned actually changes over time as 
each individual's facial features evolve over time.  In this
work, we study the problem of learning with a drifting 
target concept.  Specifically, we consider a statistical 
learning setting, in which data arrive i.i.d. in a stream,
and for each data point, the learner is required to predict 
a label for the data point at that time.  We are then 
interested in obtaining low error rates for these predictions.
The target labels are generated from a function known to reside
in a given concept space, and at each time $t$ the target function
is allowed to change by at most some distance $\change_{t}$: that is, 
the probability the new target function disagrees with the previous
target function on a random sample is at most $\change_{t}$.

This framework has previously been studied in a number of articles.
The classic works of \citet{helmbold:91,helmbold:94,bartlett:96,long:99,bartlett:00} and \citet{barve:97}
together provide a general analysis of a
very-much related setting.  Though the objectives in these works are 
specified slightly differently, the results established there are 
easily translated into our present framework,
and we summarize many of the relevant results from this literature
in Section~\ref{sec:background}.

While the results in these classic works are general, the best guarantees 
on the error rates are only known for methods having no guarantees 
of computational efficiency.
In a more recent effort, the work of \citet{min_concept} studies this problem 
in the specific context of learning a homogeneous linear separator,
when all the $\change_{t}$ values are identical.
They propose a polynomial-time algorithm (based on the modified Perceptron 
algorithm of \citet{stream_perceptron}),
and prove a bound on the number of mistakes it makes as a function of
the number of samples, when the data distribution satisfies a
certain condition called ``$\lambda$-good'' (which generalizes a useful 
property of the uniform distribution on the origin-centered unit sphere).
However, their result is again worse than that obtainable by the known
computationally-inefficient methods.

Thus, the natural question is whether there exists a polynomial-time algorithm
achieving roughly the same guarantees on the error rates known for the inefficient methods.
In the present work, we resolve this question in the case of learning homogeneous
linear separators under the uniform distribution, by proposing a polynomial-time
algorithm that indeed achieves roughly the same bounds on the error rates
known for the inefficient methods in the literature.
This represents the main technical contribution of this work.

We also study the interesting problem of \emph{adaptivity} of an 
algorithm to the sequence of $\change_{t}$ values, in the setting where
$\change_{t}$ may itself vary over time.  Since the values $\change_{t}$
might typically not be accessible in practice, it seems important to 
have learning methods having no explicit dependence on the sequence $\change_{t}$.
We propose such a method below, and prove that it achieves roughly the 
same bounds on the error rates known for methods in the literature 
which require direct access to the $\change_{t}$ values.
Also in the context of variable $\change_{t}$ sequences, we discuss
conditions on the sequence $\change_{t}$ necessary and sufficient
for there to exist a learning method guaranteeing a \emph{sublinear}
rate of growth of the number of mistakes.

We additionally study an \emph{active learning} extension to this
framework, in which, at each time, after making its prediction,
the algorithm may decide whether or not to request access to the 
label assigned to the data point at that time.  In addition to guarantees on the 
error rates (for \emph{all} times, including those for which the label was not observed),
we are also interested in bounding the number of labels we expect the algorithm to
request, as a function of the number of samples encountered thus far.

\section{Definitions and Notation}
\label{sec:definitions}
Formally, in this setting, there is a fixed distribution $\Px$ over the instance space $\X$,
and there is a sequence of independent $\Px$-distributed unlabeled data $X_{1},X_{2},\ldots$.
There is also a concept space $\C$, and a sequence of target functions $\targetseq = \{\target_{1},\target_{2},\ldots\}$ in $\C$.
Each $t$ has an associated target label $Y_{t} = \target_{t}(X_{t})$.
In this context, a (passive) learning algorithm is required, on each round $t$, 
to produce a classifier $\hat{h}_{t}$ based on the observations $(X_{1},Y_{1}),\ldots,(X_{t-1},Y_{t-1})$,
and we denote by $\hat{Y}_{t} = \hat{h}_{t}(X_{t})$ the corresponding prediction by the algorithm
for the label of $X_{t}$.  For any classifier $h$, we define $\er_{t}(h) = \Px(x : h(x) \neq \target_{t}(x))$.
We also say the algorithm makes a ``mistake'' on instance $X_{t}$ if $\hat{Y}_{t} \neq Y_{t}$;
thus, $\er_{t}(\hat{h}_{t}) = \P( \hat{Y}_{t} \neq Y_{t} | (X_{1},Y_{1}),\ldots,(X_{t-1},Y_{t-1}) )$.

For notational convenience, we will suppose the $\target_{t}$ sequence is 
chosen independently from the $X_{t}$ sequence (i.e., $\target_{t}$ is chosen prior
to the ``draw'' of $X_{1},X_{2},\ldots \sim \Px$), and is not random. 

In each of our results, we will suppose $\targetseq$ is chosen from some set $S$ of 
sequences in $\C$.  In particular, we are interested in describing the sequence $\targetseq$
in terms of the magnitudes of \emph{changes} in $\target_{t}$ from one time to the next.
Specifically, for any sequence $\changeseq = \{\change_{t}\}_{t=2}^{\infty}$ in $[0,1]$, 
we denote by $S_{\changeseq}$ the set of all sequences $\targetseq$ in $\C$ such that,
$\forall t \in \nats$, $\Px(x : h_{t}(x) \neq h_{t+1}(x)) \leq \change_{t+1}$.

Throughout this article, we denote by $\vc$ the VC dimension of $\C$ \citep{vapnik:71},
and we suppose $\C$ is such that $1 \leq \vc < \infty$.
Also, for any $x \in \reals$, define $\Log(x) = \ln(\max\{x,e\})$.

\section{Background: $(\epsilon,S)$-Tracking Algorithms}
\label{sec:background}

As mentioned, the classic literature on learning with a drifting target concept
is expressed in terms of a slightly different model.  In order to relate those
results to our present setting, we first introduce the classic setting.
Specifically, we consider a model introduced by \citet{helmbold:94},
presented here in a more-general form inspired by \citet{bartlett:00}.
For a set $S$ of sequences $\{h_{t}\}_{t=1}^{\infty}$ in $\C$, 
and a value $\epsilon > 0$, an algorithm $\alg$ is said to be
\emph{$(\epsilon,S)$-tracking} if $\exists t_{\epsilon} \in \nats$ such that,
for any choice of $\targetseq \in S$, 
$\forall T \geq t_{\epsilon}$, 
the prediction $\hat{Y}_{T}$ produced by $\alg$ at time $T$ satisfies
\begin{equation*}
\P\left( \hat{Y}_{T} \neq Y_{T} \right) \leq \epsilon.
\end{equation*}
Note that the value of the probability in the above expression 
may be influenced by $\{X_{t}\}_{t=1}^{T}$, $\{\target_{t}\}_{t=1}^{T}$, 
and any internal randomness of the algorithm $\alg$.

The focus of the results expressed in this classical model is determining
sufficient conditions on the set $S$ for there to exist an $(\epsilon,S)$-tracking algorithm,
along with bounds on the sufficient size of $t_{\epsilon}$.
These conditions on $S$ typically take the form of an assumption on the 
drift rate, expressed in terms of $\epsilon$.  Below, we summarize 
several of the strongest known results for this setting. 

\subsection{Bounded Drift Rate}
\label{sec:classic-constant-drift}

The simplest, and perhaps most elegant, results for $(\epsilon,S)$-tracking algorithms
is for the set $S$ of sequences with a bounded drift rate.  Specifically, for any $\change \in [0,1]$, 
define $S_{\change} = S_{\changeseq}$, where $\changeseq$ is such that $\change_{t+1} = \change$ for every $t \in \nats$.
The study of this problem was initiated in the original work of \citet{helmbold:94}.  
The best known general results are due to \citet{long:99}: namely, 
that for some $\change_{\epsilon} = \Theta( \epsilon^{2} / \vc )$,
for every $\epsilon \in (0,1]$, there exists an $(\epsilon,S_{\change})$-tracking algorithm for all values 
of $\change \leq \change_{\epsilon}$.\footnote{In fact, \citet{long:99} also allowed the distribution 
$\Px$ to vary gradually over time.  For simplicity, we will only discuss the case of fixed $\Px$.}
This refined an earlier result of \citet{helmbold:94} by a logarithmic factor.
\citet{long:99} further argued that this result can be achieved with $t_{\epsilon} = \Theta(\vc/\epsilon)$.
The algorithm itself involves a beautiful modification of the one-inclusion graph prediction 
strategy of \citet{haussler:94}; since its specification is somewhat involved,
we refer the interested reader to the original work of \citet{long:99} for the details.

\subsection{Varying Drift Rate: Nonadaptive Algorithm}
\label{sec:classic-varying-drift}

In addition to the concrete bounds for the case $\targetseq \in S_{\change}$, 
\citet{helmbold:94} additionally present an elegant general result.  Specifically, 
they argue that, for any $\epsilon > 0$, and any $m = \Omega\left( \frac{\vc}{\epsilon}\Log\frac{1}{\epsilon} \right)$,
if $\sum_{i=1}^{m} \Px(x : \target_{i}(x) \neq \target_{m+1}(x)) \leq m \epsilon / 24$, then 
for $\hat{h} = \argmin_{h \in \C} \sum_{i=1}^{m} \ind[ h(X_{i}) \neq Y_{i} ]$, 
$\P( \hat{h}(X_{m+1}) \neq \target_{m+1}(X_{m+1}) ) \leq \epsilon$.\footnote{They in fact
prove a more general result, which also applies to methods approximately minimizing
the number of mistakes, but for simplicity we will only discuss this basic version of the result.}
This result immediately inspires an algorithm $\alg$ which, at every time $t$, 
chooses a value $m_{t} \leq t-1$, and predicts $\hat{Y}_{t} = \hat{h}_{t}(X_{t})$,
for $\hat{h}_{t} = \argmin_{h \in \C} \sum_{i=t-m_{t}}^{t-1} \ind[ h(X_{i}) \neq Y_{i} ]$.
We are then interested in choosing $m_{t}$ to minimize the value of $\epsilon$ obtainable
via the result of \citet{helmbold:94}.  However, that method is based on the 
values $\Px( x : \target_{i}(x) \neq \target_{t}(x) )$, which would typically not 
be accessible to the algorithm.  However, suppose instead we have access to a
sequence $\changeseq$ such that $\targetseq \in S_{\changeseq}$.  
In this case, we could approximate $\Px( x : \target_{i}(x) \neq \target_{t}(x) )$
by its \emph{upper bound} $\sum_{j = i+1}^{t} \change_{j}$.  In this case, 
we are interested choosing $m_{t}$ to minimize the smallest value of $\epsilon$ 
such that $\sum_{i=t-m_{t}}^{t-1} \sum_{j=i+1}^{t} \change_{j} \leq m_{t} \epsilon / 24$
and $m_{t} = \Omega\left( \frac{\vc}{\epsilon} \Log\frac{1}{\epsilon} \right)$.
One can easily verify that this minimum is obtained at a value
\begin{equation*}
m_{t} = \Theta\left( \argmin_{m \leq t-1} \frac{1}{m} \sum_{i=t-m}^{t-1} \sum_{j=i+1}^{t} \change_{j} + \frac{\vc \Log(m/\vc)}{m} \right),
\end{equation*}
and via the result of \citet{helmbold:94} (applied to the sequence $X_{t-m_{t}},\ldots,X_{t}$)
the resulting algorithm has
\begin{equation}
\label{eqn:hl94}
\P\left( \hat{Y}_{t} \neq Y_{t} \right) \leq O\left( \min_{1 \leq m \leq t-1} \frac{1}{m} \sum_{i=t-m}^{t-1} \sum_{j=i+1}^{t} \change_{j} + \frac{\vc \Log(m/\vc)}{m} \right).
\end{equation}

As a special case, if every $t$ has $\change_{t} = \change$ for a fixed value $\change \in [0,1]$, 
this result recovers the bound $\sqrt{ \vc \change \Log(1/\change) }$,
which is only slightly larger than that obtainable from the best bound of \citet{long:99}.
It also applies to far more general and more intersting sequences $\changeseq$,
including some that allow periodic large jumps (i.e., $\change_{t} = 1$ for some indices $t$), 
others where the sequence $\change_{t}$ converges to $0$, and so on.
Note, however, that the algorithm obtaining this bound
directly depends on the sequence $\changeseq$.
One of the contributions of the present work is to remove this requirement, while
maintaining essentially the same bound, though in a slightly different form.

\subsection{Computational Efficiency}
\label{sec:classic-consistency}

\citet{helmbold:94} also proposed a reduction-based approach, which 
sometimes yields computationally efficient methods, though the tolerable $\change$ 
value is smaller.  Specifically, given any (randomized) polynomial-time algorithm $\alg$
that produces a classifier $h \in \C$ with $\sum_{t=1}^{m} \ind[ h(x_{t}) \neq  y_{t} ] = 0$
for any sequence $(x_1,y_1),\ldots,(x_m,y_m)$ for which such a classifier $h$ exists
(called the \emph{consistency problem}),
they propose a polynomial-time algorithm that is $(\epsilon,S_{\change})$-tracking
for all values of $\change \leq \change_{\epsilon}^{\prime}$, 
where $\change_{\epsilon}^{\prime} = \Theta\left( \frac{\epsilon^{2}}{\vc^{2} \Log(1/\epsilon)} \right)$.
This is slightly worse (by a factor of $\vc \Log(1/\epsilon)$) than the drift rate tolerable by the 
(typically inefficient) algorithm mentioned above.
However, it does sometimes yield computationally-efficient methods.
For instance, there are known polynomial-time algorithms for the consistency problem for the classes of 
linear separators, conjunctions, and axis-aligned rectangles.

\subsection{Lower Bounds}
\label{sec:classic-lower-bound}

\citet{helmbold:94} additionally prove \emph{lower bounds} for specific concept spaces:
namely, linear separators and axis-aligned rectangles.  They specifically argue that, for 
$\C$ a concept space 
\begin{equation*}
{\rm BASIC}_{n} = \{ \cup_{i=1}^{n} [i/n,(i+a_i)/n) : \mathbf{a} \in [0,1]^{n} \}
\end{equation*}
on $[0,1]$, under $\Px$ the uniform distribution on $[0,1]$, 
for any $\epsilon \in [0,1/e^{2}]$ and $\change_{\epsilon} \geq e^{4} \epsilon^{2} / n$,
for any algorithm $\alg$, and any $T \in \nats$, there exists a choice of $\targetseq \in S_{\change_{\epsilon}}$
such that the prediction $\hat{Y}_{T}$ produced by $\alg$ at time $T$ satisfies
$\P\left( \hat{Y}_{T} \neq Y_{T} \right) > \epsilon$.
Based on this, they conclude that no $(\epsilon,S_{\change_{\epsilon}})$-tracking algorithm exists.
Furthermore, they observe that the space ${\rm BASIC}_{n}$ is embeddable in many 
commonly-studied concept spaces, including halfspaces and axis-aligned
rectangles in $\reals^{n}$, so that for $\C$ equal to either of these spaces, 
there also is no $(\epsilon,S_{\change_{\epsilon}})$-tracking algorithm.

\section{Adapting to Arbitrarily Varying Drift Rates}
\label{sec:general}
This section presents a general bound on the error rate at each time,
expressed as a function of the rates of drift, which are allowed to be \emph{arbitrary}.
Most-importantly, in contrast to the methods from the literature discussed above,
the method achieving this general result is \emph{adaptive} to the drift rates,
so that it requires no information about the drift rates in advance.  This is an
appealing property, as it essentially allows the algorithm to learn under an \emph{arbitrary}
sequence $\targetseq$ of target concepts; the difficulty of the task
is then simply reflected in the resulting bounds on the error rates:
that is, faster-changing sequences of target functions result in larger bounds on
the error rates, but do not require a change in the algorithm itself.

\subsection{Adapting to a Changing Drift Rate}
\label{sec:adaptive-varying-rate}

Recall that the method yielding \eqref{eqn:hl94} (based on the work of \citealp{helmbold:94})
required access to the sequence $\changeseq$ of changes to achieve the stated guarantee
on the expected number of mistakes.  That method is based on choosing a classifier to predict $\hat{Y}_{t}$
by minimizing the number of mistakes among the previous $m_{t}$ samples, where $m_{t}$ is a value
chosen based on the $\changeseq$ sequence.  Thus, the key to modifying this algorithm to make it 
adaptive to the $\changeseq$ sequence is to determine a suitable choice of $m_{t}$ without reference
to the $\changeseq$ sequence.  The strategy we adopt here is to use the \emph{data} to determine 
an appropriate value $\hat{m}_{t}$ to use.  Roughly (ignoring logarithmic factors for now), the insight
that enables us to achieve this feat is that, 
for the $m_{t}$ used in the above strategy, one can show that $\sum_{i=t-m_{t}}^{t-1} \ind[ \target_{t}(X_{i}) \neq Y_{i} ]$
is roughly $\tilde{O}(\vc)$, and that 
making the prediction $\hat{Y}_{t}$ with \emph{any} $h \in \C$ with roughly $\tilde{O}(\vc)$ mistakes 
on these samples will suffice to obtain the stated bound on the error rate (up to logarithmic factors).
Thus, if we replace $m_{t}$ with the largest value $m$ for which $\min_{h \in \C} \sum_{i=t-m}^{t-1} \ind[ h(X_{i}) \neq Y_{i}]$
is roughly $\tilde{O}(\vc)$, then the above observation implies $m \geq m_{t}$.  This then
implies that, for $\hat{h} = \argmin_{h \in \C} \sum_{i=t-m}^{t-1} \ind[ h(X_{i}) \neq Y_{i} ]$, 
we have that $\sum_{i=t-m_{t}}^{t-1} \ind[ \hat{h}(X_{i}) \neq Y_{i} ]$ is also roughly $\tilde{O}(\vc)$,
so that the stated bound on the error rate will be achieved (aside from logarithmic factors) 
by choosing $\hat{h}_{t}$ as this classifier $\hat{h}$.
There are a few technical modifications to this argument needed to get the logarithmic factors to work out properly,
and for this reason the actual algorithm and proof below are somewhat more involved.
Specifically, consider the following algorithm (the value of the universal constant $K \geq 1$ will be specified below).

\begin{bigboxit}
0. For $T = 1,2,\ldots$\\
1. \quad Let $\hat{m}_{T} \!=\! \max\!\left\{ m \!\in\! \{1,\ldots,T\!-\!1\} : \min\limits_{h \in \C} \max\limits_{m^{\prime} \leq m} \frac{\sum_{t=T-m^{\prime}}^{T-1} \ind[h(X_{t}) \neq Y_{t}]}{\vc \Log(m^{\prime}/\vc) + \Log(1/\delta)} < K \right\}$\\
2. \quad Let $\hat{h}_{T} = \argmin\limits_{h \in \C} \max\limits_{m^{\prime} \leq \hat{m}_{T}} \frac{\sum_{t=T-m^{\prime}}^{T-1} \ind[h(X_{t}) \neq Y_{t}]}{\vc \Log(m^{\prime}/\vc) + \Log(1/\delta)}$
\end{bigboxit}

Note that the classifiers $\hat{h}_{t}$ chosen by this algorithm have no dependence on $\changeseq$, 
or indeed anything other than the data $\{(X_{i},Y_{i}) : i < t\}$, and the concept space $\C$.

\begin{theorem}
\label{thm:epst-adaptive}
Fix any $\delta \in (0,1)$, and let $\alg$ be the above algorithm.
For any sequence $\changeseq$ in $[0,1]$, for any $\Px$ and any choice of $\targetseq \in S_{\changeseq}$, 
for every $T \in \nats \setminus \{1\}$, with probability at least $1-\delta$, 
\begin{equation*}
\er_{T}\left( \hat{h}_{T} \right)
\leq O\left( \min_{1 \leq m \leq T-1} \frac{1}{m} \sum_{i=T-m}^{T-1} \sum_{j=i+1}^{T} \change_{j} + \frac{\vc \Log(m/\vc) + \Log(1/\delta)}{m} \right).
\end{equation*}
\end{theorem}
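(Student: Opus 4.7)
The plan is to certify the algorithm's condition at a reference window size $m^*$ using $\target_T$ as the witness classifier, which forces $\hat{m}_T \ge m^*$; the algorithm's argmin then gives empirical control on $\hat{h}_T$ over this window, and a uniform VC deviation bound converts that into the desired rate $\er_T(\hat{h}_T) = O(E(m^*))$. Write $g(m) = \vc\Log(m/\vc) + \Log(1/\delta)$, $D_t = \sum_{j=t+1}^T \change_j$ (so that $\er_t(\target_T) \le D_t$), and $\mu_m = \sum_{t=T-m}^{T-1} D_t$, and let $E(m) = \mu_m/m + g(m)/m$ denote the bound we are targeting.

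Two concentration ingredients are needed, each holding with probability $\ge 1-\delta/2$. First, a standard VC relative-deviation bound applied to $\{x \mapsto \ind[h(x)\neq \target_T(x)] : h\in\C\}$ with a dyadic union bound over $m'$: simultaneously for every $h\in\C$ and $m'\in\{1,\dots,T-1\}$,
\begin{equation*}
\Bigl|\,\tfrac{1}{m'}\sum_{t=T-m'}^{T-1}\ind[h(X_t)\neq \target_T(X_t)] - \er_T(h)\Bigr|
\;\lesssim\; \sqrt{\er_T(h)\,g(m')/m'} + g(m')/m'.
\end{equation*}
Second, a Chernoff bound on the independent Bernoullis $\ind[\target_T(X_t)\neq Y_t]$, whose means are controlled by $D_t$: uniformly over $m'$,
\begin{equation*}
\sum_{t=T-m'}^{T-1}\ind[\target_T(X_t)\neq Y_t] \;\le\; 2\mu_{m'} + C\Log(1/\delta).
\end{equation*}

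I would then pick the reference $m^* = \max\{m \le T-1 : \mu_m \le g(m)\}$ and rely on two monotonicity facts: $\mu_m/m$ is nondecreasing (averaging in older, more-drifted terms) and $g(m)/m$ is nonincreasing. These imply $\mu_{m'}/g(m') \le \mu_{m^*}/g(m^*) \le 1$ for every $m'\le m^*$, so the Chernoff bound above gives $\sum_{t=T-m'}^{T-1}\ind[\target_T(X_t)\neq Y_t] \le (2+C)\,g(m')$ for every $m'\le m^*$. Choosing the algorithm's constant $K$ slightly larger than $2+C$ then makes $\target_T$ a valid witness for $m=m^*$, forcing $\hat{m}_T \ge m^*$. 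The same monotonicity facts on the other side of $m^*$ show $E(m^*) = O(\min_m E(m))$, so $m^*$ is indeed a valid reference.

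From $\hat{m}_T \ge m^*$ and the argmin defining $\hat{h}_T$, we get $\sum_{t=T-m^*}^{T-1}\ind[\hat{h}_T(X_t)\neq Y_t] < K\,g(m^*)$. A triangle inequality on indicators together with the drift bound yields $\sum_{t=T-m^*}^{T-1}\ind[\hat{h}_T(X_t)\neq \target_T(X_t)] = O(g(m^*))$ (using $\mu_{m^*}\le g(m^*)$), which plugged into the VC bound of the first ingredient and solved as a quadratic inequality in $\sqrt{\er_T(\hat{h}_T)}$ gives $\er_T(\hat{h}_T) = O(g(m^*)/m^*) = O(E(m^*))$, as required. I expect the main obstacle to be the accounting step that picks $K$ large enough to simultaneously dominate all constants coming from the Chernoff step, the relative VC bound, the dyadic union bound over $m'$, and the ratio $\mu_{m'}/g(m')$; getting this right is what makes the statement genuinely adaptive, since $K$ must be an absolute constant that works for every sequence $\changeseq$ and every $T$.
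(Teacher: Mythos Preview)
Your proposal is correct and follows the same overall architecture as the paper: exhibit $\target_T$ as a witness at a reference window to force $\hat m_T$ to be at least that large, bound $\hat h_T$'s empirical disagreement with $\target_T$ over that window by $O(g(m))$, and convert via a VC ratio bound. The concentration ingredients (Chernoff/Bernstein for the drift mistakes of $\target_T$, and the VC relative-deviation lemma for $\{x\mapsto\ind[h(x)\neq\target_T(x)]\}$) are the same as the paper's.

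The one substantive difference is your choice of reference window. The paper takes a \emph{random} $m_T^*$, namely the largest $m$ for which the realized mistakes of $\target_T$ stay below $K g(m')$ for all $m'\le m$; this makes $\hat m_T\ge m_T^*$ immediate, but then requires a two-sided Bernstein argument to relate the random $g(m_T^*)/m_T^*$ back to the deterministic $\min_m E(m)$. You instead take a \emph{deterministic} $m^*=\max\{m:\mu_m\le g(m)\}$ and use the monotonicity of $\mu_m/m$ (up) and $g(m)/m$ (down) twice: once to show $\mu_{m'}\le g(m')$ for all $m'\le m^*$ (so Chernoff certifies the witness uniformly), and once to show $E(m^*)\asymp\min_m E(m)$ directly. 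This is a genuine simplification: your comparison to the minimum is purely deterministic, whereas the paper's requires care with the random $m_T^*$ (including a union bound over its possible values when applying the VC lemma). The cost is that you must verify the monotonicity facts and handle the edge case $m^*=T-1$, both of which are routine. The constant-tracking concern you flag at the end is real but no worse than in the paper; the absorption of the $\log m'$ union-bound cost into $g(m')$ works because $\vc\Log(m'/\vc)\gtrsim\log m'$, exactly as the paper uses.
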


Before presenting the proof of this result, we first state a crucial lemma, which follows immediately 
from a classic result of \citet{vapnik:82,vapnik:98}, combined with the fact (from \citealp{vidyasagar:03}, Theorem 4.5)
that the VC dimension of the collection of sets $\{ \{x : h(x) \neq g(x)\} : h,g \in \C \}$ is at most $10 \vc$.

\begin{lemma}
\label{lem:vc-ratio}
There exists a universal constant $c \in [1,\infty)$ such that, 
for any class $\C$ of VC dimension $\vc$, $\forall m \in \nats$, $\forall \delta \in (0,1)$,
with probability at least $1-\delta$,
every $h,g \in \C$ have
\begin{multline*}
\left| \Px(x : h(x) \neq g(x)) - \frac{1}{m}\sum_{t=1}^{m} \ind[h(X_{t}) \neq g(X_{t})] \right|
\\ \leq c \sqrt{ \left(\frac{1}{m}\sum_{t=1}^{m} \ind[h(X_{t}) \neq g(X_{t})] \right) \frac{\vc \Log(m/\vc)+\Log(1/\delta)}{m}} 
\\ + c \frac{\vc \Log(m/\vc) + \Log(1/\delta)}{m}.
\end{multline*}
\end{lemma}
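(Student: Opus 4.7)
The plan is to derive the lemma directly from the two results cited in the paragraph preceding it: the relative uniform deviation inequality of \cite{vapnik:82,vapnik:98} for VC classes, and the fact (Theorem 4.5 of \cite{vidyasagar:03}) that the disagreement class $\mathcal{G} = \{\{x \in \X : h(x) \neq g(x)\} : h,g \in \C\}$ has VC dimension at most $10\vc$. The role of Vidyasagar's bound is that it lets us treat the two-argument indicator $\ind[h(x) \neq g(x)]$ as a single indicator of a set drawn from a class whose VC dimension is still a constant multiple of $\vc$, so that any uniform-convergence bound for VC classes applies simultaneously over all $h,g \in \C$.

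Next, I would apply Vapnik's one-sided relative deviation bound to $\mathcal{G}$: with probability at least $1-\delta$, every $A \in \mathcal{G}$ satisfies, for some universal constant $c_0$,
\begin{equation*}
\frac{\Px(A) - \frac{1}{m}\sum_{t=1}^{m}\ind[X_t \in A]}{\sqrt{\Px(A)}} \leq c_{0} \sqrt{\frac{\vc \Log(m/\vc) + \Log(1/\delta)}{m}},
\end{equation*}
together with the analogous inequality obtained by swapping the two terms in the numerator and replacing $\Px(A)$ under the square root by $\frac{1}{m}\sum_{t=1}^{m}\ind[X_t \in A]$. The factor of $10$ from the VC dimension bound on $\mathcal{G}$ is absorbed into $c_0$, and the convention $\Log(x) = \ln\max\{x,e\}$ folds in the usual low-order additive constants harmlessly.

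The remaining step is the standard self-bounding manipulation that converts these Vapnik-style relative bounds into the symmetric Bernstein form required by the lemma. Writing $\epsilon = \sqrt{(\vc \Log(m/\vc) + \Log(1/\delta))/m}$, the first one-sided bound is a quadratic in $\sqrt{\Px(A)}$; solving it gives $\sqrt{\Px(A)} \leq \sqrt{\frac{1}{m}\sum_{t=1}^m \ind[X_t \in A]} + c_0\epsilon$, which, substituted back into the same one-sided bound, produces an inequality of the form $\Px(A) - \frac{1}{m}\sum_t\ind[X_t \in A] \leq c_0\epsilon\sqrt{\frac{1}{m}\sum_t\ind[X_t \in A]} + c_0^2\epsilon^2$. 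The reverse inequality is obtained the same way from the empirical-denominator version of the Vapnik bound, and choosing $c$ large enough to absorb $c_0$ and $c_0^2$ yields the two-sided statement of the lemma.

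All of the probabilistic content lives in the two cited results; the only piece of real work is the quadratic inversion in the last paragraph, which is routine but is the one place where care is needed to ensure that the empirical frequency, rather than the population probability, appears under the square root on the right-hand side of the final bound.
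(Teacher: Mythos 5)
Your proposal is correct and follows exactly the route the paper indicates: the paper gives no explicit proof, stating only that the lemma ``follows immediately'' from Vapnik's relative deviation bound applied to the disagreement class $\{\{x : h(x)\neq g(x)\} : h,g\in\C\}$, whose VC dimension is at most $10\vc$ by Vidyasagar's Theorem 4.5. Your filling-in of the quadratic-inversion step (solving the Vapnik inequality for $\sqrt{\Px(A)}$ and substituting back to move the empirical frequency under the square root) is the standard and correct way to convert the relative bound into the stated Bernstein form, and the only minor inefficiency is that the empirical-denominator direction of Vapnik's bound already gives the reverse inequality directly, without needing the inversion step a second time.
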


We are now ready for the proof of Theorem~\ref{thm:epst-adaptive}.
For the constant $K$ in the algorithm, we will choose $K = 145 c^{2}$,
for $c$ as in Lemma~\ref{lem:vc-ratio}.

\begin{proof}[Proof of Theorem~\ref{thm:epst-adaptive}]
Fix any $T \in \nats$ with $T \geq 2$, and define
\begin{multline*}
m_{T}^{*} = \max\left\{ m \in \{1,\ldots,T-1\} : \forall m^{\prime} \leq m, \phantom{\sum_{t=T-m^{\prime}}^{T-1}} \right.
\\ \left. \sum_{t=T-m^{\prime}}^{T-1} \ind[\target_{T}(X_{t}) \neq Y_{t}] < K ( \vc \Log(m^{\prime}/\vc) + \Log(1/\delta) )\right\}.
\end{multline*}
Note that
\begin{equation}
\label{eqn:adaptive-target-mistakes}
\sum_{t=T-m_{T}^{*}}^{T-1} \ind[\target_{T}(X_{t}) \neq Y_{t}] \leq K (\vc \Log(m_{T}^{*}/\vc) + \Log(1/\delta)),
\end{equation}
and also note that (since $\target_{T} \in \C$) $\hat{m}_{T} \geq m_{T}^{*}$, so that (by definition of $\hat{m}_{T}$ and $\hat{h}_{T}$)
\begin{equation*}
\sum_{t=T-m_{T}^{*}}^{T-1} \ind[\hat{h}_{T}(X_{t}) \neq Y_{t}] \leq K ( \vc \Log(m_{T}^{*}/\vc) + \Log(1/\delta) )
\end{equation*}
as well.
Therefore, 
\begin{align*}
\sum_{t=T-m_{T}^{*}}^{T-1} \!\!\ind[\target_{T}(X_{t}) \neq \hat{h}_{T}(X_{t})]
& \leq 
\sum_{t=T-m_{T}^{*}}^{T-1} \!\!\ind[\target_{T}(X_{t}) \neq Y_{t}]
+
\sum_{t=T-m_{T}^{*}}^{T-1} \!\!\ind[Y_{t} \neq \hat{h}_{T}(X_{t})]
\\ & \leq 
2 K ( \vc \Log(m_{T}^{*}/\vc) + \Log(1/\delta) ).
\end{align*}
Thus, by Lemma~\ref{lem:vc-ratio}, for each $m \in \nats$, 
with probability at least $1-\delta / (6 m^{2})$, if $m_{T}^{*} = m$, then
\begin{equation*}
\Px(x : \hat{h}_{T}(x) \neq \target_{T}(x))
\leq 
(2K+c \sqrt{2K} + c)  \frac{\vc \Log(m_{T}^{*}/\vc) + \Log(6(m_{T}^{*})^{2}/\delta)}{m_{T}^{*}}.
\end{equation*}
Furthermore, since 
$\Log(6(m_{T}^{*})^{2}) \leq \sqrt{2K} \vc \Log(m_{T}^{*} / \vc)$,
this is at most
\begin{equation*}
2(K+c \sqrt{2K})  \frac{\vc \Log(m_{T}^{*}/\vc) + \Log(1/\delta)}{m_{T}^{*}}.
\end{equation*}
By a union bound (over values $m \in \nats$), we have that with probability at least $1-\sum_{m=1}^{\infty} \delta/(6 m^{2}) \geq 1 - \delta/3$,
\begin{equation*}
\Px(x : \hat{h}_{T}(x) \neq \target_{T}(x))
\leq 2(K+c \sqrt{2K})  \frac{\vc \Log(m_{T}^{*}/\vc) + \Log(1/\delta)}{m_{T}^{*}}.
\end{equation*}

Let us denote
\begin{equation*}
\tilde{m}_{T} = \argmin_{m \in \{1,\ldots,T-1\}} \frac{1}{m} \sum_{i=T-m}^{T-1} \sum_{j=i+1}^{T} \change_{j} + \frac{\vc \Log(m/\vc) + \Log(1/\delta)}{m}.
\end{equation*}
Note that, for any $m^{\prime} \in \{1,\ldots,T-1\}$ and $\delta \in (0,1)$, 
if $\tilde{m}_{T} \geq m^{\prime}$, then
\begin{align*}
& \min_{m \in \{1,\ldots,T-1\}} \frac{1}{m} \sum_{i=T-m}^{T-1} \sum_{j=i+1}^{T} \change_{j} + \frac{\vc \Log(m/\vc) + \Log(1/\delta)}{m}
\\ & \geq \min_{m \in \{m^{\prime},\ldots,T-1\}} \frac{1}{m} \sum_{i=T-m}^{T-1} \sum_{j=i+1}^{T} \change_{j}
= \frac{1}{m^{\prime}} \sum_{i=T-m^{\prime}}^{T-1} \sum_{j=i+1}^{T} \change_{j},
\end{align*}
while if $\tilde{m}_{T} \leq m^{\prime}$, then
\begin{align*}
& \min_{m \in \{1,\ldots,T-1\}} \frac{1}{m} \sum_{i=T-m}^{T-1} \sum_{j=i+1}^{T} \change_{j} + \frac{\vc \Log(m/\vc) + \Log(1/\delta)}{m}
\\ & \geq \min_{m \in \{1,\ldots,m^{\prime}\}} \frac{\vc \Log(m/\vc)+\Log(1/\delta)}{m}
= \frac{\vc \Log(m^{\prime}/\vc) + \Log(1/\delta)}{m^{\prime}}.
\end{align*}
Either way, we have that
\begin{align}
& \min_{m \in \{1,\ldots,T-1\}} \frac{1}{m} \sum_{i=T-m}^{T-1} \sum_{j=i+1}^{T} \change_{j} + \frac{\vc \Log(m/\vc) + \Log(1/\delta)}{m} \notag
\\ & \geq \min\left\{ \frac{\vc \Log(m^{\prime}/\vc) + \Log(1/\delta)}{m^{\prime}}, \frac{1}{m^{\prime}} \sum_{i=T-m^{\prime}}^{T-1} \sum_{j=i+1}^{T} \change_{j} \right\}. \label{eqn:adaptive-min-lb}
\end{align}

For any $m \in \{1,\ldots,T-1\}$, 
applying Bernstein's inequality (see \citealp{boucheron:13}, equation 2.10) to the random variables $\ind[ \target_{T}(X_{i}) \neq Y_{i} ]/\vc$, $i \in \{T-m,\ldots,T-1\}$,
and again to the random variables $-\ind[\target_{T}(X_{i}) \neq Y_{i}]/\vc$, $i \in \{T-m,\ldots,T-1\}$, together with a union bound, 
we obtain that, for any $\delta \in (0,1)$, with probability at least $1 - \delta / (3m^{2})$, 
\begin{align}
& \frac{1}{m} \sum_{i=T-m}^{T-1} \Px( x : \target_{T}(x) \neq \target_{i}(x) ) \notag
\\ & {\hskip 1cm}- \sqrt{ \left( \frac{1}{m} \sum_{i=T-m}^{T-1} \Px( x : \target_{T}(x) \neq \target_{i}(x) ) \right) \frac{2\ln(3m^{2}/\delta)}{m} } \notag
\\ & < \frac{1}{m} \sum_{i=T-m}^{T-1} \ind[ \target_{T}(X_{i}) \neq Y_{i} ] \notag
\\ & < \frac{1}{m} \sum_{i=T-m}^{T-1} \Px( x : \target_{T}(x) \neq \target_{i}(x) ) \notag
\\ & {\hskip 1cm}+ \max\begin{cases}
\sqrt{ \left( \frac{1}{m} \sum_{i=T-m}^{T-1} \Px( x : \target_{T}(x) \neq \target_{i}(x) ) \right) \frac{4\ln(3m^{2}/\delta)}{m} } 
\\\frac{(4/3)\ln(3m^{2}/\delta)}{m} \end{cases}.\label{eqn:adaptive-empirical-ub}
\end{align}
The left inequality implies that
\begin{equation*}
\frac{1}{m} \!\sum_{i=T-m}^{T-1}\!\!\! \Px( x \!:\! \target_{T}(x) \neq \target_{i}(x) )
\leq \max\left\{ \frac{2}{m} \!\sum_{i=T-m}^{T-1} \!\!\!\ind[ \target_{T}(X_{i}) \neq Y_{i} ], \frac{8\ln(3m^{2}/\delta)}{m} \right\}\!.
\end{equation*}
Plugging this into the right inequality in \eqref{eqn:adaptive-empirical-ub}, we obtain that
\begin{multline*}
\frac{1}{m} \sum_{i=T-m}^{T-1} \ind[ \target_{T}(X_{i}) \neq Y_{i} ]
< \frac{1}{m} \sum_{i=T-m}^{T-1} \Px( x : \target_{T}(x) \neq \target_{i}(x) ) 
\\ + \max\left\{ \sqrt{ \left(\frac{1}{m} \sum_{i=T-m}^{T-1} \ind[ \target_{T}(X_{i}) \neq Y_{i} ] \right) \frac{8\ln(3m^{2}/\delta)}{m} }, \frac{\sqrt{32}\ln(3m^{2}/\delta)}{m} \right\}.
\end{multline*}
By a union bound, this holds simultaneously for all $m \in \{1,\ldots,T-1\}$ with probability at least $1-\sum_{m = 1}^{T-1} \delta / (3m^{2}) > 1 - (2/3)\delta$.
Note that, on this event,
we obtain
\begin{multline*}
\frac{1}{m} \sum_{i=T-m}^{T-1} \Px( x : \target_{T}(x) \neq \target_{i}(x) ) 
> 
\frac{1}{m} \sum_{i=T-m}^{T-1} \ind[ \target_{T}(X_{i}) \neq Y_{i} ]
\\ - \max\left\{ \sqrt{ \left(\frac{1}{m} \sum_{i=T-m}^{T-1} \ind[ \target_{T}(X_{i}) \neq Y_{i} ] \right) \frac{8\ln(3m^{2}/\delta)}{m} }, \frac{\sqrt{32}\ln(3m^{2}/\delta)}{m} \right\}.
\end{multline*}
In particular, taking $m = m_{T}^{*}$, and invoking maximality of $m_{T}^{*}$, if $m_{T}^{*} < T-1$, the right hand side is at least
\begin{equation*}
(K - 6c\sqrt{K}) \frac{\vc \Log(m_{T}^{*}/\vc) + \Log(1/\delta)}{m_{T}^{*}}.
\end{equation*}

Since $\frac{1}{m} \sum_{i=T-m}^{T-1} \sum_{j=i+1}^{T} \change_{j} \geq \frac{1}{m} \sum_{i=T-m}^{T-1} \Px( x : \target_{T}(x) \neq \target_{i}(x) )$,
taking $K = 145 c^{2}$,
we have that with probability at least $1-\delta$, if $m_{T}^{*} < T-1$, then 
\begin{align*}
& 10(K+c \sqrt{2K})\min_{m \in \{1,\ldots,T-1\}} \frac{1}{m} \sum_{i=T-m}^{T-1} \sum_{j=i+1}^{T} \change_{j} + \frac{\vc \Log(m/\vc)+\Log(1/\delta)}{m}
\\ & \geq 
10(K+c \sqrt{2K})\min\left\{ \frac{\vc \Log(m_{T}^{*}/\vc)+\Log(1/\delta)}{m_{T}^{*}}, \frac{1}{m_{T}^{*}} \sum_{i=T-m_{T}^{*}}^{T-1} \sum_{j=i+1}^{T} \change_{j} \right\}
\\ & \geq 
10(K+c \sqrt{2K})\frac{\vc \Log(m_{T}^{*}/\vc) + \Log(1/\delta)}{m_{T}^{*}}
\\ & \geq \Px(x : \hat{h}_{T}(x) \neq \target_{T}(x)).
\end{align*}
Furthermore, if $m_{T}^{*} = T-1$, then we trivially have (on the same $1-\delta$ probability event as above)
\begin{align*}
& 10(K+c \sqrt{2K})\min_{m \in \{1,\ldots,T-1\}} \frac{1}{m} \sum_{i=T-m}^{T-1} \sum_{j=i+1}^{T} \change_{j} + \frac{\vc \Log(m/\vc)+\Log(1/\delta)}{m}
\\ & \geq 10(K+c \sqrt{2K}) \min_{m \in \{1,\ldots,T-1\}} \frac{\vc \Log(m/\vc)+\Log(1/\delta)}{m}
\\ & = 10(K+c \sqrt{2K}) \frac{\vc \Log((T-1)/\vc)+\Log(1/\delta)}{T-1}
\\ & = 10(K+c \sqrt{2K})\frac{\vc \Log(m_{T}^{*}/\vc) + \Log(1/\delta)}{m_{T}^{*}}
\geq \Px(x : \hat{h}_{T}(x) \neq \target_{T}(x)).
\end{align*}
\qed
\end{proof}

\subsection{Conditions Guaranteeing a Sublinear Number of Mistakes}
\label{sec:sublinear}

One immediate implication of Theorem~\ref{thm:epst-adaptive} is 
that, if the sum of $\change_{t}$ values grows sublinearly, 
then there exists an algorithm achieving an expected number of mistakes
growing sublinearly in the number of predictions.
Formally, we have the following corollary.  

\begin{corollary}
\label{cor:sublinear-implies-sublinear}
If $\sum_{t=1}^{T} \change_{t} = o(T)$, then there exists an algorithm $\alg$ 
such that, for every $\Px$ and every choice of $\targetseq \in S_{\changeseq}$, 
\begin{equation*}
\E\left[ \sum_{t=1}^{T} \ind\left[ \hat{Y}_{t} \neq Y_{t} \right] \right] = o(T).
\end{equation*}
\end{corollary}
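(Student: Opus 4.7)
My plan is to apply Theorem~\ref{thm:epst-adaptive}, convert its high-probability bound to an expectation bound, and sum across $t \leq T$ to obtain a total of order $\sqrt{T \Delta_T}\,\mathrm{polylog}(T/\Delta_T)$, which is $o(T)$ whenever $\Delta_T := \sum_{t=1}^T \change_t = o(T)$. Concretely, running the algorithm of Theorem~\ref{thm:epst-adaptive} with confidence $\delta$ and using $\er_t \leq 1$ to absorb the $\delta$-failure event yields
\begin{equation*}
\E[\er_t(\hat h_t)] \;\leq\; c\min_{m \leq t-1}\!\left[\frac{1}{m}\sum_{i=t-m}^{t-1}\sum_{j=i+1}^t \change_j + \frac{\vc\Log(m/\vc) + \Log(1/\delta)}{m}\right] + \delta.
\end{equation*}
The inner double sum is bounded above by $\sum_{j=t-m+1}^t \change_j$ (each weight $(1-k/m)$ is at most $1$), and swapping the order of summation, each $\change_j$ appears in at most $m^*$ inner sums for any single window choice $m^*$, so $\sum_{t=1}^T \sum_{j=t-m^*+1}^t \change_j \leq m^* \Delta_T$.

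Next I will tune the parameters. Since the corollary permits $\alg$ to depend on the drift sequence $\changeseq$, I select $m^* = \lceil\sqrt{T/\Delta_T}\rceil$ and $\delta = \Delta_T/T$. With these choices the three contributions $c\,m^* \Delta_T$, $c\,T\bigl(\vc\Log(m^*/\vc) + \Log(1/\delta)\bigr)/m^*$, and $\delta T$ are each of order $\sqrt{T\Delta_T}\,\Log(T/\Delta_T)$ (up to the constant $\vc$ and lower-order additive terms). Writing $r_T := \Delta_T/T$, which tends to $0$ by hypothesis, the total divided by $T$ is $O\!\bigl(\sqrt{r_T}\,\Log(1/r_T)\bigr)$, and this tends to $0$ because $\sqrt{x}\,\Log(1/x) \to 0$ as $x \to 0^+$. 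Hence $\sum_{t=1}^T \E[\er_t(\hat h_t)] = o(T)$, which is exactly the claim.

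The main technical obstacle is the $\Log(1/\delta)$ factor inherited from the high-probability form of Theorem~\ref{thm:epst-adaptive}: if $\delta$ were forced to be $\changeseq$-independent (for instance, constant or scaling as $1/T$), this factor would degrade to $\Log T$, and the argument would only yield $o(T)$ under the strictly stronger hypothesis $\Delta_T \Log T = o(T)$. The corollary's freedom to tailor $\alg$ to the drift sequence—and in particular to set $\delta$ as a function of $\Delta_T$—is precisely what enables the $o(T)$ threshold in its full generality.
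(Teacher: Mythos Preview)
Your argument has a genuine gap: you set the confidence parameter $\delta = \Delta_T/T$, where $T$ is the horizon appearing in the $o(T)$ claim, but the corollary asks for a \emph{single} online algorithm whose cumulative mistake count is $o(T)$ as $T\to\infty$. Knowing the drift sequence $\changeseq$ is not the same as knowing the horizon $T$; the algorithm must commit to its predictions at each time $t$ without access to the future index $T$. As written, you have shown only that for every $T$ there is an algorithm $\alg_T$ (depending on $T$) with at most $O\!\bigl(\sqrt{T\Delta_T}\,\Log(T/\Delta_T)\bigr)$ expected mistakes on the first $T$ rounds, which is strictly weaker than the claim. Your final paragraph even identifies the $\Log(1/\delta)$ obstruction correctly but then resolves it by appealing to exactly the horizon-dependence that is not permitted.

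The paper repairs this by letting the confidence vary with the \emph{current} time: at round $T$ it takes $\delta_T = 1/\tilde m_T$, where $\tilde m_T$ is the minimizing window at time $T$. The point of this choice is that $\Log(1/\delta_T) = \Log(\tilde m_T) \leq \vc\Log(\tilde m_T/\vc)$, so the confidence term is absorbed into the VC complexity term and disappears from the final bound entirely, leaving $\E[\er_T(\hat h_T)] \leq O\bigl(\min_m \tfrac{1}{m}\sum\sum\Delta_j + \vc\Log(m/\vc)/m\bigr)$ with no residual $\delta$. From there the paper does \emph{not} pick a single $m^*$ tuned to the horizon; instead it fixes an arbitrary $M$, bounds the total by $M + \tfrac{\vc\Log(M/\vc)}{M}T + g_M(T)$ with $g_M(T)=o(T)$, divides by $T$, and then sends $M\to\infty$ after $T\to\infty$. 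Your route---choosing $m^* \approx \sqrt{T/\Delta_T}$ and computing an explicit rate $O(\sqrt{r_T}\Log(1/r_T))$---is more quantitative and would work once the $\delta$ issue is handled (e.g., by the paper's $\delta_T = 1/\tilde m_T$ trick, or by a doubling trick over epochs of length $2^k$), but that step must be supplied.
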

\begin{proof}
For every $T \in \nats$ with $T \geq 2$, 
let 
\begin{equation*}
\tilde{m}_{T} = \argmin_{1 \leq m \leq T-1} \frac{1}{m} \sum_{i=T-m}^{T-1} \sum_{j=i+1}^{T} \change_{j} + \frac{\vc \Log(m/\vc) + \Log(1/\delta_{T})}{m},
\end{equation*}
and define $\delta_{T} = \frac{1}{\tilde{m}_{T}}$.
Then consider running the algorithm $\alg$ from Theorem~\ref{thm:epst-adaptive},
except that in choosing $\hat{m}_{T}$ and $\hat{h}_{T}$ for each $T$, 
we use the above value $\delta_{T}$ in place of $\delta$.
Then Theorem~\ref{thm:epst-adaptive} implies that, for each $T$,
with probability at least $1-\delta_{T}$, 
\begin{equation*}
\er_{T}(\hat{h}_{T}) \leq O\left( \min_{1 \leq m \leq T-1} \frac{1}{m} \sum_{i=T-m}^{T-1} \sum_{j=i+1}^{T} \change_{j} + \frac{\vc \Log(m/\vc) + \Log(1/\delta_{T})}{m} \right).
\end{equation*}
Since $\er_{T}(\hat{h}_{T}) \leq 1$, this implies that
\begin{align*}
& \P\left( \hat{Y}_{T} \neq Y_{T} \right)
= \E\left[ \er_{T}(\hat{h}_{T}) \right]
\\ & \leq O\left( \min_{1 \leq m \leq T-1} \frac{1}{m} \sum_{i=T-m}^{T-1} \sum_{j=i+1}^{T} \change_{j} + \frac{\vc \Log(m/\vc) + \Log(1/\delta_{T})}{m} \right) + \delta_{T}
\\ & = O\left( \min_{1 \leq m \leq T-1} \frac{1}{m} \sum_{i=T-m}^{T-1} \sum_{j=i+1}^{T} \change_{j} + \frac{\vc \Log(m/\vc) + \Log(m)}{m} \right),
\end{align*}
and since $x \mapsto x \Log(m/x)$ is nondecreasing for $x \geq 1$, $\Log(m) \leq \vc \Log(m/\vc)$, so that this last expression is
\begin{equation*}
O\left( \min_{1 \leq m \leq T-1} \frac{1}{m} \sum_{i=T-m}^{T-1} \sum_{j=i+1}^{T} \change_{j} + \frac{\vc \Log(m/\vc)}{m} \right).
\end{equation*}

Now note that, for any $t \in \nats$ and $m \in \{1,\ldots,t-1\}$, 
\begin{equation}
\label{eqn:sublinear-eqn-1}
\frac{1}{m}\sum_{s=t-m}^{t-1} \sum_{r=s+1}^{t} \change_{r} 
\leq \frac{1}{m}\sum_{s=t-m}^{t-1} \sum_{r=t-m+1}^{t} \change_{r} 
= \sum_{r=t-m+1}^{t} \change_{s}.
\end{equation}
Let $\beta_{t}(m) = \max\left\{\sum_{r=t-m+1}^{t} \change_{r} , \frac{\vc\Log(m/\vc)}{m}\right\}$,
and note that 
$\sum_{r=t-m+1}^{t} \change_{r} + \frac{\vc\Log(m/\vc)}{m} \leq 2 \beta_{t}(m)$.
Thus, combining the above with \eqref{eqn:sublinear-eqn-1}, linearity of expectations, and 
the fact that the probability of a mistake on a given round is at most $1$, we obtain
\begin{equation*}
\E\left[ \sum_{t=1}^{T} \ind\left[ \hat{Y}_{t} \neq Y_{t} \right] \right]
= O\left(\sum_{t=1}^{T} \min_{m \in \{1,\ldots,t-1\}} \beta_{t}(m) \land 1 \right).
\end{equation*}
Fixing any $M \in \nats$, we have that for any $T > M$, 
\begin{align*}
& \sum_{t=1}^{T} \min_{m \in \{1,\ldots,t-1\}} \beta_{t}(m) \land 1
\leq M + \sum_{t=M+1}^{T} \beta_{t}(M) \land 1
\\ & \leq M + \sum_{t=M+1}^{T} \ind\left[ \frac{\vc \Log(M/\vc)}{M} \geq \sum_{r=t-M+1}^{t} \change_{r} \right] \frac{\vc \Log(M/\vc)}{M}
\\ & {\hskip 1cm}+ \sum_{t=M+1}^{T} \ind\left[ \sum_{r=t-M+1}^{t} \change_{r} > \frac{\vc \Log(M/\vc)}{M} \right]
\\ & \leq M + \frac{\vc \Log(M/\vc)}{M} T + \sum_{t=M+1}^{T} \frac{M}{\vc \Log(M/\vc)} \sum_{r=t-M+1}^{t} \change_{r}
\\ & = \frac{\vc \Log(M/\vc)}{M} T + g_{M}(T),
\end{align*}
where $g_{M}$ is a function satisfying $g_{M}(T) = o(T)$ (holding $M$ fixed).
Since this is true of \emph{any} $M \in \nats$, we have that 
\begin{align*}
\lim_{T \to \infty} \frac{1}{T} \sum_{t=1}^{T} \min_{m \in \{1,\ldots,t-1\}} \beta_{t}(m) \land 1
& \leq \lim_{M \to \infty} \lim_{T \to \infty} \frac{\vc \Log(M/\vc)}{M} + \frac{g_{M}(T)}{T}
\\ & = \lim_{M \to \infty} \frac{\vc \Log(M/\vc)}{M}
= 0, 
\end{align*}
so that $\E\left[ \sum_{t=1}^{T} \ind\left[ \hat{Y}_{t} \neq Y_{t} \right] \right] = o(T)$, as claimed.
\qed
\end{proof}

For many concept spaces of interest, the condition $\sum_{t=1}^{T} \change_{t} = o(T)$ in Corollary~\ref{cor:sublinear-implies-sublinear}
is also a \emph{necessary} condition for \emph{any} algorithm to guarantee a sublinear number of mistakes.  
For simplicity, we will establish this for the class of \emph{homogeneous linear separators} on $\reals^{2}$,
with $\Px$ the uniform distribution on the unit circle, in the following theorem.  This can easily be extended to many other spaces, 
including higher-dimensional linear separators or axis-aligned rectangles in $\reals^{k}$, by embedding an analogous setup into those spaces.

\begin{theorem}
\label{thm:sublinear-necessary}
If $\X = \{ x \in \reals^{2} : \|x\|=1 \}$, $\Px$ is ${\rm Uniform}(\X)$, and $\C = \{ x \mapsto 2 \ind[ w \cdot x \geq 0 ] - 1 : w \in \reals^{2}, \|w\|=1 \}$ is the class of homogeneous linear separators,
then for any sequence $\changeseq$ in $[0,1]$, there exists an algorithm $\alg$ such that 
$\E\left[ \sum_{t=1}^{T} \ind\left[ \hat{Y}_{t} \neq Y_{t} \right] \right] = o(T)$
for \emph{every} choice of $\targetseq \in S_{\changeseq}$
\emph{if and only if} $\sum_{t=1}^{T} \change_{t} = o(T)$.
\end{theorem}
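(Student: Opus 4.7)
\medskip
\noindent\textbf{Proof plan.}
The forward (``if'') direction will be immediate: $\C$ has VC dimension $\vc = 2$, so Corollary~\ref{cor:sublinear-implies-sublinear} directly provides an algorithm with $o(T)$ expected mistakes. For the converse I plan to argue by contrapositive via the probabilistic method: assuming $\sum_{t=1}^{T} \change_t \neq o(T)$, I will exhibit a random target sequence, supported on $S_{\changeseq}$, that forces any algorithm to incur $\Omega(T)$ expected mistakes on some subsequence, and then extract a deterministic witness.

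The hard distribution I will use is a ``random walk on the circle''. Parametrize $h \in \C$ by the angle $\phi \in [0, 2\pi)$ of its unit normal, so that under ${\rm Uniform}(\X)$ the disagreement between two such classifiers equals the circle-metric distance $\rho$ between their angles divided by $\pi$. Setting $\change'_t = \min(\change_t, 1/2)$ (so $\change'_t \geq \change_t / 2$ since $\change_t \leq 1$) and letting $\sigma_2, \sigma_3, \ldots$ be i.i.d.\ uniform on $\{-1, +1\}$, I will define $\phi_1 = 0$ and $\phi_{t+1} = \phi_t + \pi \change'_{t+1} \sigma_{t+1}$. Then $\Px(x : h_t(x) \neq h_{t+1}(x)) = \change'_{t+1} \leq \change_{t+1}$, so the random sequence lies in $S_{\changeseq}$ almost surely.

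The key step will be a per-round lower bound: for any (possibly randomized) algorithm $\alg$, I will show $\E[\ind[\hat Y_t \neq Y_t]] \geq \change'_t$ for every $t$. Condition at time $t$ on $\alg$'s internal randomness, on $X_1, \ldots, X_{t-1}$, and on $\sigma_2, \ldots, \sigma_{t-1}$: under this conditioning $\hat h_t$ has a fixed angle $\hat\phi_t$, while $\phi_t$ equals $\phi_{t-1} \pm \pi \change'_t$ with equal probability. Because $\change'_t \leq 1/2$, those two candidates are separated by exactly $2\pi \change'_t$ on the circle, so the triangle inequality gives $\tfrac12 [\rho(\hat\phi_t, \phi_{t-1}+\pi\change'_t) + \rho(\hat\phi_t, \phi_{t-1}-\pi\change'_t)] \geq \pi \change'_t$; dividing by $\pi$ and taking outer expectations yields the per-round bound. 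Summing, $\E[\sum_{t \leq T} \ind[\hat Y_t \neq Y_t]] \geq \sum_{t \leq T} \change'_t \geq \tfrac12 \sum_{t \leq T} \change_t$, which is $\Omega(T_k)$ along a subsequence whenever $\sum \change_t \neq o(T)$. A reverse Fatou argument applied to the bounded ratios $\tfrac{1}{T_k}\sum_{t \leq T_k}\ind[\hat Y_t \neq Y_t]$ then extracts a single realization of $\boldsymbol\sigma$---i.e., a deterministic $\targetseq \in S_{\changeseq}$---on which the expected mistakes remain $\Omega(T_k)$ for infinitely many $k$, contradicting sublinearity. The main subtlety I foresee is the truncation $\change'_t = \min(\change_t, 1/2)$: without it, $\change_t$ close to $1$ would leave the two candidate positions of $\phi_t$ nearly coincident on the circle, causing the triangle-inequality step---and with it the entire adversarial construction---to collapse.
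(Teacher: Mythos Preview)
Your proposal is correct and follows essentially the same approach as the paper: the same random-walk construction on the circle with Rademacher increments of size $\pi\min\{\change_t,1/2\}$, the same per-round lower bound of $\min\{\change_t,1/2\}$ via symmetry of $\sigma_t$, and the same (reverse) Fatou argument to pass from the randomized adversary to a deterministic $\targetseq$. One small point to tighten: your triangle-inequality step assumes $\hat h_t$ has an angle $\hat\phi_t$, but the algorithm need not output a classifier in $\C$; the immediate fix is to use the pointwise inequality $\ind[\hat h_t(x)\neq h^{+}(x)]+\ind[\hat h_t(x)\neq h^{-}(x)]\geq \ind[h^{+}(x)\neq h^{-}(x)]$ (equivalently, condition on $\hat Y_t$ directly, as the paper does), which yields the same bound for arbitrary predictors.
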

\begin{proof}
The ``if'' part follows immediately from Corollary~\ref{cor:sublinear-implies-sublinear}.
For the ``only if'' part, suppose $\changeseq$ is such that $\sum_{t=1}^{T} \change_{t} \neq o(T)$.
It suffices to argue that for any algorithm $\alg$, there exists a choice of $\targetseq \in S_{\changeseq}$ 
for which $\E\left[ \sum_{t=1}^{T} \ind\left[ \hat{Y}_{t} \neq Y_{t} \right] \right] \neq o(T)$.  Toward this end, fix any algorithm $\alg$.
We proceed by the probabilistic method, constructing a \emph{random} sequence $\targetseq \in S_{\changeseq}$.
Let $B_{1},B_{2},\ldots$ be independent Bernoulli($1/2$) random variables (also independent from the unlabeled data $X_{1},X_{2},\ldots$).
We define the sequence $\targetseq$ inductively.  For simplicity, we will represent each classifier in \emph{polar} coordinates, 
writing $h_{\phi}$ (for $\phi \in \reals$) to denote the classifier that, for $x = (x_{1},x_{2})$, 
classifies $x$ as $h_{\phi}(x) = 2 \ind[ x_{1} \cos(\phi) + x_{2} \sin(\phi) \geq 0 ] - 1$; note that 
$h_{\phi} = h_{\phi + 2\pi}$ for every $\phi \in \reals$.
As a base case, start by defining a function $\target_{0} = h_{0}$, and letting $\phi_{0} = 0$.
Now for any $t \in \nats$, supposing $\target_{t-1}$ is already defined to be $h_{\phi_{t-1}}$,
we define $\phi_{t} = \phi_{t-1} + \min\{\change_{t},1/2\} \pi B_{t}$, and $\target_{t} = h_{\phi_{t}}$.
Note that $\Px( x : \target_{t}(x) \neq \target_{t-1}(x) ) = \min\{\change_{t},1/2\}$ for every $t \in \nats$,
so that this inductively defines a (random) choice of $\targetseq \in S_{\changeseq}$.

For each $t \in \nats$, let $Y_{t} = \target_{t}(X_{t})$.
Now fix any algorithm $\alg$, and consider the sequence $\hat{Y}_{t}$ of predictions
the algorithm makes for points $X_{t}$, when the target sequence $\targetseq$ is chosen as above.
Then note that, for any $t \in \nats$, since $\hat{Y}_{t}$ and $B_{t}$ are independent, 
\begin{align*}
\P\left( \hat{Y}_{t} \neq Y_{t} \right)
& \geq \E\left[ \P\left( \hat{Y}_{t} \neq Y_{t} \middle| \hat{Y}_{t},\phi_{t-1} \right) \right]
\\ & \geq \E\left[ \frac{1}{2} \P\left( h_{\phi_{t-1}+\min\{\change_{t},1/2\}\pi}(X_{t}) \neq h_{\phi_{t-1}-\min\{\change_{t},1/2\}\pi}(X_{t}) \middle| \phi_{t-1} \right) \right].
\end{align*}
Furthermore, since $\min\{\change_{t},1/2\} \pi \leq \pi/2$, the regions $\{x : h_{\phi_{t-1}+\min\{\change_{t},1/2\}\pi}(x) \neq h_{\phi_{t-1}}(x) \}$ 
and $\{x : h_{\phi_{t-1}-\min\{\change_{t},1/2\}\pi}(x) \neq h_{\phi_{t-1}}(x) \}$ have zero-probability overlap (indeed, are disjoint if $\change_{t} < 1/2$),
the above equals $\min\{\change_{t},1/2\}$.

By Fatou's lemma, linearity of expectations, and the law of total expectation, we have that
\begin{align*}
\E\left[ \limsup_{T \to \infty} \frac{1}{T} \E\left[ \sum_{t=1}^{T} \ind[ \hat{Y}_{t} \neq Y_{t} ] \middle| \targetseq \right] \right]
& \geq \limsup_{T \to \infty} \frac{1}{T} \sum_{t=1}^{T} \P\left( \hat{Y}_{t} \neq Y_{t} \right)
\\ & \geq \limsup_{T \to \infty} \frac{1}{T} \sum_{t=1}^{T} \min\{\change_{t},1/2\}.
\end{align*}
Since $\sum_{t=1}^{T} \change_{t} \neq o(T)$, the rightmost expression is strictly greater than zero.
Thus, it must be that, with probility strictly greater than $0$, 
\begin{equation*}
\limsup_{T \to \infty} \frac{1}{T} \E\left[ \sum_{t=1}^{T} \ind[ \hat{Y}_{t} \neq Y_{t} ] \middle| \targetseq \right] > 0.
\end{equation*}
In particular, this implies that there exists a (nonrandom) choice of the sequence $\targetseq \in S_{\changeseq}$
for which $\E\left[ \sum_{t=1}^{T} \ind\left[ \hat{Y}_{t} \neq Y_{t} \right] \right] \neq o(T)$.
Since this holds for \emph{any} choice of the algorithm $\alg$, this completes the proof.
\qed
\end{proof}

\section{Polynomial-Time Algorithms for Linear Separators}
\label{sec:halfspaces}

In this section, we suppose $\change_{t} = \change$ for every $t \in \nats$, for a fixed constant $\change > 0$,
and we consider the special case of learning homogeneous linear separators in $\reals^{k}$ under a uniform distribution 
on the origin-centered unit sphere.
In this case, the analysis of \citet{helmbold:94} mentioned in Section~\ref{sec:classic-consistency} implies that it is possible to achieve a 
bound on the error rate that is $\tilde{O}(d \sqrt{\change})$, 
using an algorithm that runs in time $\poly(d,1/\change,\log(1/\delta))$ (and independent of $t$) for each prediction.
This also implies that it is possible to achieve expected number of mistakes among $T$ predictions that is $\tilde{O}(d \sqrt{\change}) \times T$.
\citet{min_concept}\footnote{This 
work in fact studies a much broader model of drift, which in fact allows the distribution $\Px$ to vary with time as well.  However, this $\tilde{O}((d \change)^{1/4}) \times T$ result can be obtained
from their more-general theorem by calculating the various parameters for this particular setting.}
have since proven that a variant of the Perceptron algorithm is capable of achieving an expected number of mistakes $\tilde{O}( (d \change)^{1/4} ) \times T$.

Below, we improve on this result by showing that there exists an efficient algorithm that achieves a 
bound on the error rate that is $\tilde{O}(\sqrt{d \change})$, 
as was possible with the inefficient algorithm of \citet{helmbold:94,long:99} mentioned in Section~\ref{sec:classic-constant-drift}.
This leads to a bound on the expected number of mistakes that is $\tilde{O}(\sqrt{d \change}) \times T$.
Furthermore, our approach also allows us to present the method as an \emph{active learning}
algorithm, and to bound the expected number of queries, as a function of the 
number of samples $T$, by $\tilde{O}(\sqrt{d \change}) \times T$.
The technique is based on a modification of the algorithm of \citet{helmbold:94},
replacing an empirical risk minimization step with (a modification of) the computationally-efficient algorithm of \citet{awasthi:13}.

Formally, define the class of homogeneous linear separators as the set of classifiers 
$h_{w} : \reals^{d} \to \{-1,+1\}$, for $w \in \reals^{d}$ with $\|w\|=1$,
such that $h_{w}(x) = \sign( w \cdot x )$ for every $x \in \reals^{d}$.

\subsection{An Improved Guarantee for a Polynomial-Time Algorithm}
\label{sec:efficient-linsep}

We have the following result.

\begin{theorem}
\label{thm:linsep-uniform}
When $\C$ is the space of homogeneous linear separators (with $d \geq 4$)
and $\Px$ is the uniform distribution on the surface of 
the origin-centered unit sphere in $\reals^{d}$, 
for any fixed $\change > 0$, 
for any $\delta \in (0,1/e)$, 
there is an algorithm that runs in time $\poly(d,1/\change,\log(1/\delta))$ for each time $t$,
such that for any $\targetseq \in S_{\change}$, 
for every sufficiently large $t \in \nats$, with probability at least $1-\delta$, 
\begin{equation*}
\er_{t}(\hat{h}_{t}) = O\left( \sqrt{\change d \log\left(\frac{1}{\delta}\right) } \right).
\end{equation*}
Also, running this algorithm with $\delta = \sqrt{\change d} \land 1/e$, 
the expected number of mistakes among the first $T$ instances is
$O\left( \sqrt{ \change d \log\left(\frac{1}{\change d}\right) } T \right)$.
Furthermore, the algorithm can be run as an \emph{active learning} algorithm,
in which case, for this choice of $\delta$, the expected number of labels 
requested by the algorithm among the first $T$ instances is
$O\left( \sqrt{\change d} \log^{3/2}\left(\frac{1}{\change d}\right) T \right)$.
\end{theorem}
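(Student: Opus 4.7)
The plan is to instantiate the window-based reduction of \citet{helmbold:94} discussed in Section~\ref{sec:classic-consistency}, but to replace the empirical-risk-minimization (or consistency) subroutine on the window by a polynomial-time algorithm for learning homogeneous linear separators under the uniform distribution on the sphere in the presence of adversarial label noise, namely a suitable modification of the algorithm of \citet{awasthi:13}. In our setting, the role of adversarial noise is played by the drift itself: for a sample $X_i$ with $i \in \{T-m,\ldots,T-1\}$, summing the per-step drift bounds yields $\Px(x : \target_i(x) \neq \target_T(x)) \leq (T-i)\change$, so the expected fraction of window points whose observed label $Y_i = \target_i(X_i)$ disagrees with $\target_T(X_i)$ is at most $m\change/2$. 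The critical parameter is a window length $m$ of order $\sqrt{d/\change}$ (up to logarithmic factors), which balances this drift-induced noise rate of $O(\sqrt{\change d})$ against the sample-complexity requirement $m = \tilde{\Omega}(d)$ of the ABL subroutine.

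With this choice of $m$, a Bernstein concentration argument shows that with probability at least $1-\delta/2$ the empirical fraction of window points on which $\target_T$ disagrees with $Y_i$ is $\eta = O\bigl(\sqrt{\change d \log(1/\delta)}\bigr)$. The ABL guarantee then implies that in $\poly(d,1/\change,\log(1/\delta))$ time we can compute a unit vector $\hat{w}$ such that $\hat{h}_T(x) = \sign(\hat{w} \cdot x)$ satisfies $\Px(x : \hat{h}_T(x) \neq \target_T(x)) \leq O(\eta) + \epsilon_0$ for any internal target accuracy $\epsilon_0 > 0$, which we drive below the target rate; this yields the stated bound $\er_T(\hat{h}_T) = O\bigl(\sqrt{\change d \log(1/\delta)}\bigr)$. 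For the mistake bound, setting $\delta = \sqrt{\change d} \land 1/e$ makes the failure event contribute $\delta T = O(\sqrt{\change d})\, T$ to expected mistakes and the per-round high-probability error contribute $O\bigl(\sqrt{\change d \log(1/(\change d))}\bigr)\, T$. For the active-learning extension, we use the label-efficient variant of ABL's localization scheme, in which labels are requested only for samples falling in a thin band around the current hypothesis; adapted to our streaming setting and summed over $T$ rounds, this yields the claimed $O\bigl(\sqrt{\change d}\log^{3/2}(1/(\change d))\, T\bigr)$ bound on expected label queries.

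The main obstacle is that the ``noise'' arising from drift is not standard adversarial or Massart noise in the classical i.i.d.\ sense: the probability of a label being flipped with respect to $\target_T$ grows linearly with a sample's age within the window, so the mislabeled set has a specific temporal (and distribution-dependent) structure. The algorithm of \citet{awasthi:13} is formally stated for data drawn from a fixed joint distribution whose marginal on $\X$ is uniform and whose labeling disagrees with the best homogeneous linear separator on at most an $\eta$-fraction. The real technical content of the proof is therefore to re-verify that each stage of the ABL pipeline (iterative localized hinge-loss minimization restricted to a shrinking band around the current hypothesis, together with its outlier-removal and margin-based analysis) remains valid when we only condition on the high-probability event that an arbitrary set of at most an $\eta$-fraction of the $m$ drawn samples are mislabeled relative to $\target_T$. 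This amounts to arguing that ABL's guarantees continue to hold in an agnostic/corruption mode against the empirical disagreement rate on the sample; once established, this combines cleanly with the Helmbold--Long window strategy to yield the theorem.
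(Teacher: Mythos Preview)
Your proposal is correct and follows essentially the same strategy as the paper: process the stream in windows of length $\tilde{\Theta}(\sqrt{d/\change})$, treat the drift-induced disagreement with $\target_T$ as adversarial label noise of rate $\tilde{O}(\sqrt{\change d})$, and run a re-verified version of the Awasthi--Balcan--Long margin-based localization to output a hypothesis with that error. The paper's concrete instantiation uses non-overlapping batches with the previous batch's output serving as the predictor throughout the current batch (this is what makes the active-learning label count clean, since each sample is processed by ABL exactly once), and it explicitly prepends a modified-Perceptron phase from \citet{stream_perceptron,min_concept} to obtain the constant-error warm start that the margin-based refinement requires; both are natural realizations of the ``suitable modification'' and ``adapted to our streaming setting'' you allude to.
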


We first state the algorithm used to obtain this result.  It is primarily based on a 
margin-based learning strategy of \citet{awasthi:13}, combined with an initialization 
step based on a modified Perceptron rule from \citet{stream_perceptron,min_concept}.
For $\tau > 0$ and $x \in \reals$, define $\ell_{\tau}(x) = \max\left\{0, 1 - \frac{x}{\tau}\right\}$.
Consider the following algorithm and subroutine; 
parameters $\delta_{k}$, $m_{k}$, $\tau_{k}$, $r_{k}$, $b_{k}$, $\alpha$, and $\kappa$
will all be specified in the context of the proof; we suppose $M = \sum_{k=0}^{\lceil \log_{2}(1/\alpha) \rceil} m_{k}$.

\begin{bigboxit}
Algorithm: DriftingHalfspaces\\
0. Let $\tilde{h}_{0}$ be an arbitrary classifier in $\C$\\
1. For $i = 1,2,\ldots$\\
2. \quad $\tilde{h}_{i} \gets {\rm ABL}(M (i-1), \tilde{h}_{i-1})$\\
\end{bigboxit}

\begin{bigboxit}
Subroutine: ${\rm ModPerceptron}(t,\tilde{h})$\\
0. Let $w_{t}$ be any element of $\reals^{d}$ with $\|w_{t}\| = 1$\\
1. For $m = t+1,t+2,\ldots,t+m_{0}$\\
2. \quad Choose $\hat{h}_{m} = \tilde{h}$ (i.e., predict $\hat{Y}_{m} = \tilde{h}(X_{m})$ as the prediction for $Y_{m}$)\\
3. \quad Request the label $Y_{m}$\\
4. \quad If $h_{w_{m-1}}(X_{m}) \neq Y_{m}$\\
5. \qquad $w_{m} \gets w_{m-1} - 2(w_{m-1} \cdot X_{m}) X_{m}$\\
6. \quad Else $w_{m} \gets w_{m-1}$\\
7. Return $w_{t+m_{0}}$
\end{bigboxit}

\begin{bigboxit}
Subroutine: ${\rm ABL}(t,\tilde{h})$\\
0. Let $w_{0}$ be the return value of ${\rm ModPerceptron}(t,\tilde{h})$\\
1. For $k = 1,2,\ldots,\lceil \log_{2}(1/\alpha) \rceil$\\
2. \quad $W_{k} \gets \{\}$\\
3. \quad For $s = t + \sum_{j=0}^{k-1} m_{j} + 1, \ldots, t + \sum_{j=0}^{k} m_{j}$\\
4. \qquad Choose $\hat{h}_{s} = \tilde{h}$ (i.e., predict $\hat{Y}_{s} = \tilde{h}(X_{s})$ as the prediction for $Y_{s}$)\\
5. \qquad If $|w_{k-1} \cdot X_{s}| \leq b_{k-1}$, Request label $Y_{s}$ and let $W_{k} \gets W_{k} \cup \{(X_{s},Y_{s})\}$\\
6. \quad Find $v_{k} \in \reals^{d}$ with $\|v_{k} - w_{k-1}\| \leq r_{k}$, $0 < \|v_{k}\| \leq 1$, 
and\\ {\hskip 7mm}$\sum\limits_{(x,y) \in W_{k}} \ell_{\tau_{k}}(y (v_{k} \cdot x)) \leq \inf\limits_{v : \|v-w_{k-1}\| \leq r_{k}} \sum\limits_{(x,y) \in W_{k}} \ell_{\tau_{k}}(y (v \cdot x)) + \kappa |W_{k}|$\\
7. \quad Let $w_{k} = \frac{1}{\|v_{k}\|} v_{k}$\\
8. Return $h_{w_{\lceil \log_{2}(1/\alpha) \rceil-1}}$
\end{bigboxit}

Before stating the proof, we have a few additional lemmas that will be needed.
The following result for ${\rm ModPerceptron}$ was proven by \citet{min_concept}.

\begin{lemma}
\label{lem:perceptron}
Suppose $\change < \frac{1}{512}$.
Consider the values $w_{m}$ obtained during the execution of ${\rm ModPerceptron}(t,\tilde{h})$.
$\forall m \in \{t+1,\ldots, t+ m_{0}\}$, $\Px(x : h_{w_{m}}(x) \neq h_{m}^{*}(x)) \leq \Px(x : h_{w_{m-1}}(x) \neq h_{m}^{*}(x))$.
Furthermore, letting $c_{1} = \frac{\pi^{2}}{d \cdot 400 \cdot 2^{15}}$, if
$\Px(x : h_{w_{m-1}}(x) \neq h_{m}^{*}(x)) \geq 1/32$,
then with probability at least $1/64$,
$\Px(x : h_{w_{m}}(x) \neq h_{m}^{*}(x)) \leq (1 - c_{1}) \Px(x : h_{w_{m-1}}(x) \neq h_{m}^{*}(x))$.
\end{lemma}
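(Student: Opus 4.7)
The plan is to exploit the geometric structure of the update: $w_m$ is the reflection of $w_{m-1}$ through the hyperplane orthogonal to $X_m$, so $\|w_m\|=1$ throughout, and direct expansion gives
\[
w_m \cdot w^{*}_m = w_{m-1} \cdot w^{*}_m - 2(w_{m-1} \cdot X_m)(w^{*}_m \cdot X_m),
\]
where I write $w^{*}_m$ for the unit normal vector representing $h^{*}_m$. For the non-increase claim, the update fires only when $\sign(w_{m-1} \cdot X_m) \neq \sign(w^{*}_m \cdot X_m)$ (since $Y_m = h^{*}_m(X_m)$), which forces the product $(w_{m-1} \cdot X_m)(w^{*}_m \cdot X_m)$ to be nonpositive and hence $w_m \cdot w^{*}_m \geq w_{m-1} \cdot w^{*}_m$. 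Under the uniform distribution on the sphere, $\Px(x : h_u(x) \neq h_v(x)) = \theta(u,v)/\pi$ for unit $u,v$, so the angle $\theta_m := \theta(w_m, w^{*}_m)$ is nonincreasing and the first claim follows.

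For the shrinkage claim, set $\theta := \theta(w_{m-1}, w^{*}_m)$, which satisfies $\theta \geq \pi/32$ by assumption. I would decompose $X_m = Z + X_m^{\perp}$, where $Z$ is the projection onto the two-dimensional plane $\Pi$ spanned by $w_{m-1}$ and $w^{*}_m$. By rotational symmetry, $Z/\|Z\|$ is uniform on the unit circle in $\Pi$ independently of $\|Z\|^2$, and the latter has a Beta$(1,(d-2)/2)$ distribution with mean $2/d$; for $d \geq 4$, this gives $\|Z\|^2 \geq c/d$ with universal constant probability. Parametrizing the direction of $Z$ by an angle $\psi$ measured from the direction orthogonal to $w_{m-1}$ inside $\Pi$, a short trigonometric identity yields
\[
-2(w_{m-1} \cdot X_m)(w^{*}_m \cdot X_m) = 2\|Z\|^2 \sin(\psi)\sin(\theta - \psi),
\]
and the mistake event corresponds to $\psi$ lying in a union of arcs of total angular measure $2\theta$. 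Conditional on a mistake, $\psi$ is uniform on $[0,\theta]$, so with constant probability $\sin(\psi)\sin(\theta - \psi) = \Omega(\theta^{2})$. Combining the three independent good events (mistake, good $\psi$, good $\|Z\|^2$) together with $\theta \geq \pi/32$ yields, with a probability that can be arranged to be at least $1/64$, the estimate $\cos(\theta_m) - \cos(\theta) = \Omega(\theta^2/d)$; applying the mean value theorem to $\cos$ on $[\theta_m, \theta]$ then converts this into $\theta - \theta_m = \Omega(\theta/d)$, i.e., $\theta_m \leq (1-c_1)\theta$ for $c_1 = \Theta(1/d)$.

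The main technical obstacle is bookkeeping the absolute constants through the Beta tail bound (which is where the hypothesis $d \geq 4$ enters) and the trigonometric inequalities so that the success probability lands precisely at $1/64$ and the shrinkage factor matches the stated $c_1 = \pi^2/(d \cdot 400 \cdot 2^{15})$. Observe that the drift hypothesis $\change < 1/512$ plays no role in the one-step argument above---which uses only $Y_m = h^{*}_m(X_m)$ and the current angle---and is inherited from the context of the broader algorithm, where it ensures that $\theta(w_{m-1}, w^{*}_m)$ evolves slowly enough across consecutive rounds for the one-step bounds to compose.
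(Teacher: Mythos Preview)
The paper does not supply its own proof of this lemma: it is stated with the attribution ``proven by \citet{min_concept}'' and used as a black box. Your proposal is a faithful reconstruction of the argument from that source (itself adapted from the modified-Perceptron analysis of \citet{stream_perceptron}): the reflection identity $w_m\cdot w^{*}_m = w_{m-1}\cdot w^{*}_m - 2(w_{m-1}\cdot X_m)(w^{*}_m\cdot X_m)$, the sign observation giving monotonicity of the angle, the decomposition of $X_m$ into the $2$-plane spanned by $w_{m-1}$ and $w^{*}_m$, and the Beta$(1,(d-2)/2)$ law for the squared planar projection are exactly the ingredients used there. Your mean-value step is fine once one notes $\sin(\xi)\leq\xi\leq\theta$ for $\xi\in(\theta_m,\theta)$, which turns $\cos(\theta_m)-\cos(\theta)=\Omega(\theta^{2}/d)$ into $\theta-\theta_m=\Omega(\theta/d)$, and then $\theta\geq\pi/32$ yields the multiplicative contraction. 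Your remark that the hypothesis $\change<1/512$ is inert for the one-step statement is also correct; that bound is used only downstream in Lemma~\ref{lem:perceptron-init} when composing steps across time. So there is nothing to compare against in the present paper, and your sketch matches the cited proof up to the constant bookkeeping you flag.
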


This implies the following.

\begin{lemma}
\label{lem:perceptron-init}
Suppose $\change \leq \frac{\pi^{2}}{400 \cdot 2^{27} (d+\ln(4/\delta))}$.
For $m_{0} = \max\{\lceil 128 (1/c_{1}) \ln(32) \rceil,$ $\lceil 512 \ln(\frac{4}{\delta}) \rceil \}$,
with probability at least $1-\delta/4$, 
${\rm ModPerceptron}(t,\tilde{h})$ returns a vector $w$ with 
$\Px(x : h_{w}(x) \neq h_{t+m_{0}+1}^{*}(x)) \leq 1/16$.
\end{lemma}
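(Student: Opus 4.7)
My plan is to define $e_m := \Px(x : h_{w_m}(x) \neq \target_m(x))$ for $m \in \{t, \ldots, t+m_0\}$ and to prove that, with probability at least $1-\delta/4$, $e_{t+m_0} \leq 1/16 - \change$. The conclusion of the lemma then follows by one triangle-inequality step against the drift bound $\targetseq \in S_\change$: $\Px(x : h_{w_{t+m_0}}(x) \neq \target_{t+m_0+1}(x)) \leq e_{t+m_0} + \change \leq 1/16$.

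From Lemma~\ref{lem:perceptron} I would extract a two-regime recurrence. Monotonicity combined with the triangle inequality gives the crude bound $e_m \leq e_{m-1} + \change$ at every step. Moreover, whenever $\Px(x : h_{w_{m-1}}(x) \neq \target_m(x)) \geq 1/32$ (which is guaranteed when $e_{m-1} \geq 1/32 + \change$), with conditional probability at least $1/64$ we instead have $e_m \leq (1-c_1)(e_{m-1} + \change)$. Call this event a ``good step'' and let $N$ denote the number of good steps over the $m_0$ iterations. Unrolling the recurrence yields $e_{t+m_0} \leq (1-c_1)^N e_t + \change \sum_{j=1}^{m_0} (1-c_1)^{N_j}$, where $N_j$ counts good steps in $\{t+j, \ldots, t+m_0\}$.

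The good-step indicators stochastically dominate independent Bernoulli$(1/64)$ variables conditional on the history (as long as the $1/32$ precondition is met). Standard multiplicative Chernoff then yields $\P(N < m_0/128) \leq \exp(-m_0/512) \leq \delta/4$ by the choice $m_0 \geq 512\ln(4/\delta)$, and combined with $m_0 \geq 128\ln(32)/c_1$ this gives $(1-c_1)^N \leq 1/32$. A short case split handles the possibility that $e_m$ dips below $1/32$ before round $t+m_0$: from that point the remaining rounds contribute only an additive $m_0 \change$ of drift, which the bound on $\change$ renders negligible.

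The main obstacle is to control the drift sum $\change \sum_j (1-c_1)^{N_j}$ tightly, since the naive estimate $\sum_j (1-c_1)^{N_j} \leq m_0$ is too weak by a logarithmic factor. The target bound is $\sum_j (1-c_1)^{N_j} = O(1/c_1)$, which together with the hypothesis $\change \leq \pi^2/(400 \cdot 2^{27}(d + \ln(4/\delta)))$ gives $\change/c_1 \leq 2^{-12}$ and hence a contribution of at most $1/32$. This sharper estimate follows from showing that the good steps are approximately evenly spread over the $m_0$ rounds, which can be formalized blockwise: partition the rounds into blocks of size $O(1/c_1)$, apply Chernoff within each block to force at least one good step per block with probability at least $1 - O(\delta/m_0)$, and union-bound over blocks. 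This forces each $N_j$ to grow at least linearly in the number of blocks after round $t+j$, so $(1-c_1)^{N_j}$ decays geometrically in $m_0 - j$ and the sum telescopes to $O(1/c_1)$. Combining with $(1-c_1)^N \leq 1/32$ then finishes the proof.
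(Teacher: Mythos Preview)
Your overall plan---set up the two-regime recurrence from Lemma~\ref{lem:perceptron}, count good steps via a Chernoff bound, and case-split on whether the error ever dips below $1/32$---is exactly the paper's approach. The divergence is in your treatment of the drift term, and there you have created a difficulty that does not exist.

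You write that the naive estimate $\sum_j (1-c_1)^{N_j} \leq m_0$ is ``too weak by a logarithmic factor,'' and propose a blockwise refinement to reach $\sum_j (1-c_1)^{N_j} = O(1/c_1)$. But $m_0$ is already $O(1/c_1)$: by definition $m_0 = \max\{\lceil 128\ln(32)/c_1\rceil, \lceil 512\ln(4/\delta)\rceil\}$, and the hypothesis on $\change$ is calibrated so that $\change m_0$ is a small absolute constant in either branch of the max (the $d$ in $1/c_1$ cancels against the $d$ in the denominator of the $\change$ bound, and likewise for the $\ln(4/\delta)$ term). The paper simply uses $\change m_0 \leq 1/32$ directly; no refinement of the sum is needed. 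There is no logarithmic loss anywhere.

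Moreover, your blockwise argument does not deliver what you claim. With blocks of size $B = O(1/c_1)$ there are only $m_0/B = O(1)$ blocks, so $N_j$ can increase by at most $O(1)$ across all $j$, and $(1-c_1)^{N_j}$ stays within a constant factor of $1$ throughout---you recover nothing better than the naive bound $m_0$. If instead you shrink $B$ to get genuine geometric decay, the per-block confidence from Chernoff degrades and the union bound over $m_0/B$ blocks fails. Either way the sketch does not close.

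Drop the blockwise argument entirely and just verify $\change m_0 \leq 1/32$ from the stated hypothesis; the rest of your proof is fine and matches the paper.
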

\begin{proof}
By Lemma~\ref{lem:perceptron} and a union bound, in general we have
\begin{equation}
\label{eqn:perceptron-weak-update}
\Px(x : h_{w_{m}}(x) \neq h_{m+1}^{*}(x)) \leq \Px(x : h_{w_{m-1}}(x) \neq h_{m}^{*}(x)) + \change.
\end{equation}
Furthermore, if $\Px(x : h_{w_{m-1}}(x) \neq h_{m}^{*}(x)) \geq 1/32$, 
then wth probability at least $1/64$,
\begin{equation}
\label{eqn:perceptron-strong-update}
\Px(x : h_{w_{m}}(x) \neq h_{m+1}^{*}(x)) \leq (1-c_{1}) \Px(x : h_{w_{m-1}}(x) \neq h_{m}^{*}(x)) + \change.
\end{equation}
In particular, this implies that the number $N$ of values $m \in \{t+1,\ldots,t+m_{0}\}$ with either
$\Px(x : h_{w_{m-1}}(x) \neq h_{m}^{*}(x)) < 1/32$ or $\Px(x : h_{w_{m}}(x) \neq h_{m+1}^{*}(x)) \leq (1-c_{1}) \Px(x : h_{w_{m-1}}(x) \neq h_{m}^{*}(x)) + \change$
is lower-bounded by a ${\rm Binomial}(m,1/64)$ random variable.
Thus, a Chernoff bound implies that with probability at least $1 - \exp\{ - m_{0} / 512 \} \geq 1 - \delta/4$,
we have $N \geq m_{0} / 128$.  Suppose this happens.

Since $\change m_{0} \leq 1/32$, if any $m \in \{t+1,\ldots,t+m_{0}\}$ has $\Px(x : h_{w_{m-1}}(x) \neq h_{m}^{*}(x)) < 1/32$,
then inductively applying \eqref{eqn:perceptron-weak-update} implies that
$\Px(x : h_{w_{t+m_{0}}}(x) \neq h_{t+m_{0}+1}^{*}(x)) \leq 1/32 + \change m_{0} \leq 1/16$.
On the other hand, if all $m \in \{t+1,\ldots,t+m_{0}\}$ have $\Px(x : h_{w_{m-1}}(x) \neq h_{m}^{*}(x)) \geq 1/32$,
then in particular we have $N$ values of $m \in \{t+1,\ldots,t+m_{0}\}$ satisfying \eqref{eqn:perceptron-strong-update}.
Combining this fact with \eqref{eqn:perceptron-weak-update} inductively, we have that
\begin{multline*}
\Px(x : h_{w_{t+m_{0}}}(x) \neq h_{t+m_{0}+1}^{*}(x)) 
\leq (1-c_{1})^{N} \Px(x : h_{w_{t}}(x) \neq h_{t+1}^{*}(x)) + \change m_{0} 
\\ \leq (1-c_{1})^{(1/c_{1}) \ln(32) } \Px(x : h_{w_{t}}(x) \neq h_{t+1}^{*}(x)) + \change m_{0} 
\leq \frac{1}{32} + \change m_{0} 
\leq \frac{1}{16}.
\end{multline*}
\qed
\end{proof}

Next, we consider the execution of ${\rm ABL}(t,\tilde{h})$, and let the sets $W_{k}$ be as in that execution.
We will denote by $w^{*}$ the weight vector with $\|w^{*}\|=1$ such that $h_{t+m_{0}+1}^{*} = h_{w^{*}}$.
Also denote by $M_{1} = M-m_{0}$.

The proof relies on a few results proven in the work of \citet{awasthi:13}, which we summarize in the following lemmas.
Although the results were proven in a slightly different setting in that work (namely, agnostic learning under a fixed joint distribution),
one can easily verify that their proofs remain valid in our present context as well.

\begin{lemma}
\label{lem:denoised-risk}
\citep{awasthi:13}
Fix any $k \in \{1,\ldots,\lceil \log_{2}(1/\alpha) \rceil\}$.
For a universal constant $c_{7} > 0$, suppose $b_{k-1} = c_{7} 2^{1-k} / \sqrt{d}$, 
and let $z_{k} = \sqrt{r_{k}^{2}/(d-1) + b_{k-1}^{2}}$.
For a universal constant $c_{1} > 0$, if $\|w^{*} - w_{k-1}\| \leq r_{k}$,
\begin{multline*}
{\hskip -3mm}\left| \E\!\left[ \sum_{(x,y) \in W_{k}} \ell_{\tau_{k}}(|w^{*} \cdot x|) \Big| w_{k-1}, |W_{k}| \right] 
- \E\!\left[ \sum_{(x,y) \in W_{k}} \ell_{\tau_{k}}(y (w^{*} \cdot x)) \Big| w_{k-1}, |W_{k}| \right] \right|
\\ \leq c_{1} |W_{k}| \sqrt{2^{k} \change M_{1}} \frac{z_{k}}{\tau_{k}}.
\end{multline*}
\end{lemma}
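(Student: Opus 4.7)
The plan is to reduce the gap between the two expectations to a disagreement-weighted second-moment quantity and then apply Cauchy--Schwarz. The key starting observation is that $\ell_{\tau_k}$ is $1/\tau_k$-Lipschitz and that $||w^*\cdot x|-y(w^*\cdot x)|=2|w^*\cdot x|\,\ind[y\neq h_{w^*}(x)]$ (the two arguments agree when $y=\sign(w^*\cdot x)$ and differ by $2|w^*\cdot x|$ otherwise), which immediately yields the pointwise bound
\begin{equation*}
\bigl|\ell_{\tau_k}(|w^*\cdot x|)-\ell_{\tau_k}(y(w^*\cdot x))\bigr|\le \frac{2|w^*\cdot x|}{\tau_k}\,\ind[y\neq h_{w^*}(x)].
\end{equation*}
Hence the left-hand side of the lemma is at most $(2/\tau_k)\,\E\!\bigl[\sum_{(x,y)\in W_k}|w^*\cdot x|\,\ind[y\neq h_{w^*}(x)]\,\big|\,w_{k-1},|W_k|\bigr]$.

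Next, I would treat each element of $W_k$ via its originating time index $s$ in round $k$, using that $(X_s,Y_s)\in W_k$ iff $|w_{k-1}\cdot X_s|\le b_{k-1}$. By Cauchy--Schwarz, the conditional expectation given $X_s$ lies in the band is at most
\begin{equation*}
\sqrt{\E[(w^*\cdot X_s)^2 \mid \text{band}]}\cdot\sqrt{\P(Y_s\neq h_{w^*}(X_s)\mid \text{band})}.
\end{equation*}
The first factor is handled by the standard thin-band second-moment computation: decomposing $w^*=\alpha w_{k-1}+\beta u$ with $u\perp w_{k-1}$, using $\|w^*-w_{k-1}\|\le r_k$ and the rotational symmetry of the uniform distribution on the sphere, one obtains $\E[(w^*\cdot X_s)^2\mid\text{band}]\le r_k^2/(d-1)+b_{k-1}^2=z_k^2$. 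The second factor is controlled by combining drift with band probability: since $Y_s=h_s^*(X_s)$ and $s$ lies within $M_1$ time steps of $t+m_0+1$, we have $\P(Y_s\neq h_{w^*}(X_s))=\Px(h_s^*\neq h_{w^*})\le \change M_1$, and under the uniform distribution $\P(\text{band})\gtrsim b_{k-1}\sqrt{d}\asymp 2^{-k}$, so $\P(Y_s\neq h_{w^*}(X_s)\mid\text{band})=O(2^k\change M_1)$.

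Combining, each element of $W_k$ contributes at most $O(z_k\sqrt{2^k\change M_1})$ to the conditional expectation, and summing over the $|W_k|$ elements yields the claimed bound after absorbing the $2/\tau_k$ factor. The main technical obstacle is the thin-band second-moment estimate: the anisotropic conditioning $|w_{k-1}\cdot X_s|\le b_{k-1}$ must be handled carefully, but this is essentially the known computation from \citet{awasthi:13}, where spherical symmetry ensures that the component of $X_s$ orthogonal to $w_{k-1}$ retains variance of order $1/(d-1)$ after conditioning on the band.
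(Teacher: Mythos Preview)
Your sketch is correct and follows the standard argument from \citet{awasthi:13}; the paper itself does not prove this lemma but merely cites it, so there is no paper-side proof to compare against. The Lipschitz-plus-Cauchy--Schwarz reduction, the band second-moment computation yielding $z_k^2$, and the bound $\P(Y_s\neq h_{w^*}(X_s)\mid\text{band})\le \change M_1/\P(\text{band})=O(2^k\change M_1)$ via Lemma~\ref{lem:uniform-P-concentration} are exactly the ingredients used in that reference (adapted to the drifting-target setting), and the conditioning on $|W_k|$ causes no trouble since, given $w_{k-1}$ and which time indices land in the band, the corresponding $X_s$ are independent with the band-conditional law and the per-$s$ disagreement bound is uniform in $s$.
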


\begin{lemma}
\label{lem:margin-error-concentration}
\citep{balcan:13}
For any $c > 0$, there is a constant $c^{\prime} > 0$ depending only on $c$ (i.e., not depending on $d$)
such that, for any $u,v \in \reals^{d}$ with $\|u\|=\|v\|=1$, letting $\sigma = \Px(x : h_{u}(x) \neq h_{v}(x))$,
if $\sigma < 1/2$, then
\begin{equation*}
\Px\left( x : h_{u}(x) \neq h_{v}(x) \text{ and } |v \cdot x| \geq c^{\prime} \frac{\sigma}{\sqrt{d}} \right) \leq c \sigma.
\end{equation*}
\end{lemma}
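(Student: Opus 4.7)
The plan is to reduce the bound to a two-dimensional computation in the plane spanned by $u$ and $v$, and then finish with a first-moment argument. Choose an orthonormal basis with $e_{1}=v$ and $u=\cos\theta\,e_{1}+\sin\theta\,e_{2}$, where $\theta\in(0,\pi/2)$ is the angle between $u$ and $v$; since $\Px$ is uniform on the sphere, $\sigma = \theta/\pi$, so the hypothesis $\sigma<1/2$ guarantees $\theta<\pi/2$. Both $v\cdot x=x_{1}$ and $u\cdot x=\cos\theta\,x_{1}+\sin\theta\,x_{2}$ depend only on $(x_{1},x_{2})$, so passing to polar coordinates $(r,\phi)$ in that plane, the event $h_{u}(x)\neq h_{v}(x)$ is equivalent to $\cos\phi\cdot\cos(\phi-\theta)<0$, which carves out two disjoint arcs in $\phi$ of total length $2\theta$.

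I would then use the standard fact that under $\Px$, the pair $(r,\phi)$ is independent, with $\phi$ uniform on $[-\pi,\pi]$ and $r$ having density $(d-2)\,r\,(1-r^{2})^{(d-4)/2}$ on $[0,1]$; this comes from writing the uniform measure on $S^{d-1}$ in these coordinates and integrating out the $(d-3)$-sphere of radius $\sqrt{1-r^{2}}$ in the orthogonal complement of the span of $v$ and $e_{2}$. A short Beta-integral gives $\E[r^{2}]=2/d$, so by Jensen $\E[r]\leq \sqrt{2/d}$; as a consistency check, $\Px(h_{u}(x)\neq h_{v}(x)) = 2\theta/(2\pi) = \sigma$, matching the definition.

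On the arc $\phi\in(\pi/2,\pi/2+\theta)$ (the other is symmetric), set $\phi'=\phi-\pi/2\in(0,\theta)$, so that $|v\cdot x| = r\,|\cos\phi| = r\sin\phi' \leq r\phi' \leq \pi\sigma\,r$. Conditioning on $\phi$ lying in this arc, $\phi'$ is uniform on $(0,\theta)$ and still independent of $r$, so Markov's inequality yields
\begin{equation*}
\Px\!\left( r\phi'\geq \frac{c'\sigma}{\sqrt{d}} \,\Big|\, h_{u}(x)\neq h_{v}(x)\right) \leq \frac{\E[r]\,\E[\phi']}{c'\sigma/\sqrt{d}} \leq \frac{\sqrt{2/d}\cdot \pi\sigma/2}{c'\sigma/\sqrt{d}} = \frac{\pi}{c'\sqrt{2}}.
\end{equation*}
Multiplying by $\Px(h_{u}(x)\neq h_{v}(x)) = \sigma$ and taking $c' = \pi/(c\sqrt{2})$ closes the bound, with $c'$ depending only on $c$.

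The only real obstacle is bookkeeping: verifying the independence and the form of the density of $(r,\phi)$ under the marginal of the spherical measure, and using $\sigma<1/2$ to keep the two disagreement arcs disjoint so the angular measure is cleanly $2\theta$. Once these are in hand, the estimate reduces to Jensen plus a single Beta-integral, both routine.
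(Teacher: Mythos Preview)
The paper does not supply its own proof of this lemma; it is quoted from \citet{balcan:13} and invoked as a black box, so there is no in-paper argument to compare against.

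Your argument is correct. The reduction to the two-dimensional span of $u$ and $v$, the identification of the disagreement region as two arcs of angular width $\theta=\pi\sigma$, the ${\rm Beta}(1,(d-2)/2)$ law for $r^{2}$ (yielding $\E[r^{2}]=2/d$ and hence $\E[r]\le\sqrt{2/d}$ by Jensen), and the final Markov step all check out; the containment $\{|v\cdot x|\ge t\}\subseteq\{r\phi'\ge t\}$ follows from $\sin\phi'\le\phi'$ in the correct direction. Two small remarks worth adding when you write it up: your density for $r$ implicitly assumes $d\ge 3$ (for $d=2$ one has $r\equiv 1$ and the bound is immediate; the paper's application has $d\ge 4$ in any case), and for completeness the second arc is $\phi\in(-\pi/2,\,\theta-\pi/2)$, where the substitution $\phi'=\phi+\pi/2$ again gives $|\cos\phi|=\sin\phi'$ and the identical computation goes through. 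Neither point affects validity.
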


The following is a well-known lemma concerning concentration around the equator for the uniform distribution (see e.g., \citealp{stream_perceptron,balcan:07,awasthi:13});
for instance, it easily follows from the formulas for the area in a spherical cap derived by \citet{li:11}.

\begin{lemma}
\label{lem:uniform-P-concentration}
For any constant $C > 0$, there are constants $c_{2},c_{3} > 0$ depending only on $C$ (i.e., independent of $d$) such that,
for any $w \in \reals^{d}$ with $\|w\|=1$, $\forall \gamma \in [0, C/\sqrt{d}]$,
\begin{equation*}
c_{2} \gamma \sqrt{d} \leq \Px\left( x : |w \cdot x| \leq \gamma \right) \leq c_{3} \gamma \sqrt{d}.
\end{equation*}
\end{lemma}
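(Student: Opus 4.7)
The plan is to exploit the rotational invariance of the uniform distribution on the sphere to reduce the problem to a one-dimensional calculation, and then estimate the resulting integrals. First I would use the fact that $\Px$ is rotationally invariant and $\|w\|=1$ to assume without loss of generality that $w = e_{1}$, the first standard basis vector. Then $w \cdot x = x_{1}$, and the quantity $\Px(x : |w \cdot x| \leq \gamma)$ equals $\Px(x : |x_{1}| \leq \gamma)$ where $x$ is uniform on the surface of the unit sphere $S^{d-1}$.

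Next I would invoke the well-known fact that the marginal density of $x_{1}$ under the uniform distribution on $S^{d-1}$ is proportional to $(1-t^{2})^{(d-3)/2}$ on $[-1,1]$, so that
\begin{equation*}
\Px(|x_{1}| \leq \gamma) = \frac{\int_{-\gamma}^{\gamma} (1-t^{2})^{(d-3)/2}\, dt}{\int_{-1}^{1} (1-t^{2})^{(d-3)/2}\, dt}.
\end{equation*}
I would estimate numerator and denominator separately. For the numerator, since $|t| \leq \gamma \leq C/\sqrt{d}$ forces $t^{2} \leq C^{2}/d$, the integrand satisfies $(1-C^{2}/d)^{(d-3)/2} \leq (1-t^{2})^{(d-3)/2} \leq 1$, and the lower bound converges to $e^{-C^{2}/2}$ as $d \to \infty$ (and is bounded below by a positive constant of $C$ for all $d \geq d_{0}(C)$; small $d$ can be absorbed into the constants). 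Hence the numerator lies between $2\gamma\, \kappa_{1}(C)$ and $2\gamma$ for a constant $\kappa_{1}(C) > 0$.

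For the denominator, I would recognize the integral as a Beta integral: $\int_{-1}^{1} (1-t^{2})^{(d-3)/2}\, dt = B\!\left(\tfrac{1}{2}, \tfrac{d-1}{2}\right) = \frac{\sqrt{\pi}\, \Gamma((d-1)/2)}{\Gamma(d/2)}$. Stirling's approximation gives $\Gamma((d-1)/2)/\Gamma(d/2) = \Theta(1/\sqrt{d})$, so the denominator equals $\Theta(1/\sqrt{d})$, with both the upper and lower constants independent of $d$. Taking the ratio yields $\Px(|x_{1}| \leq \gamma) = \Theta(\gamma\sqrt{d})$, with the hidden constants depending only on $C$, which is precisely the stated two-sided inequality.

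The main obstacle, such as it is, is ensuring that the constants are genuinely independent of $d$. The numerator's lower constant comes from the uniform bound $(1-C^{2}/d)^{(d-3)/2} \geq e^{-C^{2}/2} \cdot (1-o(1))$, so care is needed for small $d$; this can be handled either by restricting to $d$ large (and absorbing finitely many small $d$ into the constant) or by using the crude bound $(1-t^{2})^{(d-3)/2} \geq (1-C^{2}/d)^{d/2}$ and noting this ratio is bounded below uniformly in $d \geq 4$ by a positive function of $C$. Likewise, for the denominator, one should verify Stirling's bounds give two-sided $\Theta(1/\sqrt{d})$ for all $d \geq 4$ rather than merely asymptotically; this is standard. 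Once these constant-tracking details are settled, the lemma follows immediately.
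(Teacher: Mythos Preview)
Your approach is correct. Note that the paper does not actually supply its own proof of this lemma: it states the result as well known and remarks that it follows from the spherical-cap area formulas of \citet{li:11} (and cites earlier uses in \citet{stream_perceptron,balcan:07,awasthi:13}). Your argument---reduce by rotational invariance to the first coordinate, use the marginal density $(1-t^{2})^{(d-3)/2}$, bound the numerator by $\Theta(\gamma)$ via $(1-C^{2}/d)^{(d-3)/2}\geq\kappa_{1}(C)$, and evaluate the denominator as $B(1/2,(d-1)/2)=\Theta(1/\sqrt{d})$ via Stirling---is exactly the standard derivation underlying those spherical-cap formulas, so it is in the same spirit as what the paper defers to. Your remarks about absorbing small $d$ into the constants (the surrounding theorem already assumes $d\geq 4$) and checking that Stirling gives two-sided bounds for all relevant $d$ are the right caveats and are routine to discharge.
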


Based on this lemma, \citet{awasthi:13} prove the following.

\begin{lemma}
\label{lem:opt-margin-loss}
\citep{awasthi:13}
For $X \sim \Px$, for any $w \in \reals^{d}$ with $\|w\|=1$, for any $C > 0$ and $\tau, b \in [0,C/\sqrt{d}]$,
for $c_{2},c_{3}$ as in Lemma~\ref{lem:uniform-P-concentration},
\begin{equation*}
\E\left[ \ell_{\tau}( |w^{*} \cdot X| ) \Big| |w \cdot X| \leq b \right] \leq \frac{c_{3} \tau}{c_{2} b}.
\end{equation*}
\end{lemma}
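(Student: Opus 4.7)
The plan is to reduce the conditional expectation to a ratio and then bound numerator and denominator separately using the concentration-near-the-equator estimate of Lemma~\ref{lem:uniform-P-concentration}.

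First I would expand the definition of conditional expectation:
\begin{equation*}
\E\!\left[ \ell_{\tau}(|w^{*} \cdot X|) \,\big|\, |w \cdot X| \leq b \right]
= \frac{\E\!\left[ \ell_{\tau}(|w^{*} \cdot X|) \ind[|w \cdot X| \leq b] \right]}{\Px(x : |w \cdot X| \leq b)}.
\end{equation*}
The denominator is immediately bounded below: since $b \in [0, C/\sqrt{d}]$, Lemma~\ref{lem:uniform-P-concentration} (applied to $w$ and $\gamma = b$) gives $\Px(x : |w \cdot x| \leq b) \geq c_{2} b \sqrt{d}$.

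For the numerator, the key observation is that $\ell_{\tau}(u) = \max\{0, 1 - u/\tau\}$ is bounded above by $1$ and vanishes for $u \geq \tau$, so pointwise
\begin{equation*}
\ell_{\tau}(|w^{*} \cdot X|) \leq \ind[ |w^{*} \cdot X| \leq \tau ].
\end{equation*}
Dropping the extra indicator $\ind[|w \cdot X| \leq b]$ then upper-bounds the numerator by $\Px(x : |w^{*} \cdot x| \leq \tau)$, and since $\tau \in [0, C/\sqrt{d}]$ and $\|w^{*}\| = 1$, applying the upper-bound half of Lemma~\ref{lem:uniform-P-concentration} (to $w^{*}$ with $\gamma = \tau$) yields $\Px(x : |w^{*} \cdot x| \leq \tau) \leq c_{3} \tau \sqrt{d}$.

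Combining the two estimates gives the claimed ratio:
\begin{equation*}
\E\!\left[ \ell_{\tau}(|w^{*} \cdot X|) \,\big|\, |w \cdot X| \leq b \right]
\leq \frac{c_{3} \tau \sqrt{d}}{c_{2} b \sqrt{d}} = \frac{c_{3} \tau}{c_{2} b},
\end{equation*}
and the $\sqrt{d}$ factors cancel, which is precisely why the statement is dimension-free. There is no real obstacle here: the only thing to be careful about is applying Lemma~\ref{lem:uniform-P-concentration} to \emph{both} $w$ and $w^{*}$ (which is legitimate because both are unit vectors) and using the same constants $c_{2},c_{3}$ that depend only on $C$, so that $\tau$ and $b$ may be handled within a common regime $[0, C/\sqrt{d}]$.
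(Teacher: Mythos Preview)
Your proof is correct. The paper does not actually prove this lemma; it is simply quoted as a result from \citet{awasthi:13}, so there is no in-paper argument to compare against. Your argument---bound $\ell_{\tau}(|w^{*}\cdot X|)$ by $\ind[|w^{*}\cdot X|\le\tau]$, then apply Lemma~\ref{lem:uniform-P-concentration} separately to the numerator (with $w^{*}$, $\gamma=\tau$) and denominator (with $w$, $\gamma=b$)---is exactly the natural one, and matches what one finds in the cited source.
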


The following is a slightly stronger version of a result of \citet{awasthi:13} (specifically, 
the size of $m_{k}$, and consequently the bound on $|W_{k}|$, are both improved by a factor of $d$ 
compared to the original result).

\begin{lemma}
\label{lem:margin-error-bound}
Fix any $\delta \in (0,1/e)$.
For universal constants $c_{4},c_{5},c_{6},c_{7},c_{8},c_{9},c_{10} \in (0,\infty)$, 
for an appropriate choice of $\kappa \in (0,1)$ (a universal constant),
if $\alpha = c_{9} \sqrt{\change d \log\left(\frac{1}{\kappa\delta}\right)}$,
for every $k \in \{1,\ldots,\lceil \log_{2}(1/\alpha) \rceil\}$,
if $b_{k-1} = c_{7} 2^{1-k} / \sqrt{d}$, $\tau_{k} = c_{8} 2^{-k} / \sqrt{d}$, $r_{k} = c_{10} 2^{-k}$, $\delta_{k} = \delta / (\lceil \log_{2}(4/\alpha) \rceil - k)^{2}$,
and $m_{k} = \left\lceil c_{5} \frac{2^{k}}{\kappa^{2}} d \log\left(\frac{1}{\kappa\delta_{k}} \right)\right\rceil$,
and if $\Px(x : h_{w_{k-1}}(x) \neq h_{w^{*}}(x)) \leq 2^{-k-3}$,
then with probability at least $1-(4/3)\delta_{k}$, 
$|W_{k}| \leq c_{6} \frac{1}{\kappa^{2}} d \log\left(\frac{1}{\kappa\delta_{k}}\right)$
and
$\Px(x : h_{w_{k}}(x) \neq h_{w^{*}}(x)) \leq 2^{-k-4}$.
\end{lemma}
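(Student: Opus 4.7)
The plan is to prove the lemma by induction on $k$, viewing the $k$-th round of ABL as a localized learning problem around $w_{k-1}$ in the style of \citet{awasthi:13}, with the drift of the target absorbed through Lemma~\ref{lem:denoised-risk}. The inductive hypothesis $\Px(h_{w_{k-1}} \neq h_{w^{*}}) \leq 2^{-k-3}$ translates, via the standard spherical-geometry identity relating $\|u - v\|$ for unit vectors to $\Px(h_u \neq h_v)$, into $\|w_{k-1} - w^{*}\| = O(2^{-k})$; choosing the constant $c_{10}$ sufficiently large makes $w^{*}$ a feasible point of the constraint $\|v - w_{k-1}\| \leq r_k$ in Step 6.

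First I would control $|W_k|$. By Lemma~\ref{lem:uniform-P-concentration}, each fresh $X_s$ enters $W_k$ independently with probability $\Theta(b_{k-1}\sqrt{d}) = \Theta(2^{-k})$, so $\E[|W_k|] = \Theta(m_k \cdot 2^{-k}) = \Theta(d\Log(1/(\kappa\delta_k))/\kappa^2)$, and a multiplicative Chernoff bound gives both the upper bound on $|W_k|$ and a matching lower bound with failure probability at most $\delta_k/3$. Next, conditionally on $w_{k-1}$ and $|W_k|$, each $(X_s,Y_s)\in W_k$ is distributed as $\Px$ restricted to the slab $\{x : |w_{k-1}\cdot x|\leq b_{k-1}\}$, with $Y_s = \target_s(X_s)$. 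Lemma~\ref{lem:opt-margin-loss} then bounds the \emph{denoised} per-sample hinge loss $\E[\ell_{\tau_k}(|w^{*}\cdot X_s|)\mid|w_{k-1}\cdot X_s|\leq b_{k-1}] \leq c_3 \tau_k / (c_2 b_{k-1})$, which for appropriate $c_7, c_8$ is at most a small constant multiple of $\kappa$. Lemma~\ref{lem:denoised-risk} then converts this to a bound on the actual labeled hinge loss of $w^{*}$, $\sum_{(x,y)\in W_k}\ell_{\tau_k}(y(w^{*}\cdot x))$; the crucial point is that $\alpha = c_9\sqrt{\change d \Log(1/(\kappa\delta))}$ is calibrated so that the drift correction $\sqrt{2^k\change M_1}\, z_k/\tau_k$ is itself $O(\kappa)$, since $z_k/\tau_k = O(1)$ and $2^k M_1 = \tilde{O}(d/(\kappa^2 \alpha^2)) = \tilde{O}(1/(\kappa^2 \change))$.

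The next step is uniform convergence for the $(1/\tau_k)$-Lipschitz hinge loss over the restricted class $\{v : \|v - w_{k-1}\|\leq r_k\}$. Because $r_k / \tau_k = O(\sqrt{d})$ and examples are drawn from the slab where $|w_{k-1}\cdot x|\leq b_{k-1}$, the variance of $\ell_{\tau_k}(y(v\cdot x)) - \ell_{\tau_k}(y(w^{*}\cdot x))$ on this restricted class is $O(L^{*})$ where $L^{*}$ is the small hinge-loss baseline; a Bernstein/Talagrand-style uniform deviation inequality of the form $O(\sqrt{L^{*} d\Log(1/\delta_k)/|W_k|} + d\Log(1/\delta_k)/|W_k|)$ therefore holds with probability $1 - \delta_k/3$. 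This variance-sensitive bound is exactly what saves the factor of $d$ compared to \citet{awasthi:13} and dictates the choice $m_k = \Theta(2^k d \Log(1/(\kappa\delta_k))/\kappa^2)$. Combined with the $\kappa|W_k|$ optimization slack in Step 6 (with $w^{*}$ as a competitor) this yields $\E[\ell_{\tau_k}(y(v_k\cdot X))\mid|w_{k-1}\cdot X|\leq b_{k-1}] = O(\kappa)$. Since $\ind[\sign(v_k\cdot x)\neq y]\leq \ell_{\tau_k}(y(v_k\cdot x))$ and scaling to unit norm does not change the classifier, the disagreement mass of $h_{w_k}$ with $h_{w^{*}}$ inside the slab is $O(\kappa) \cdot \Px(|w_{k-1}\cdot X|\leq b_{k-1}) = O(\kappa \cdot 2^{-k})$. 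Outside the slab I would apply Lemma~\ref{lem:margin-error-concentration} twice, to the pair $(w_{k-1}, w^{*})$ and, after a triangle-inequality argument giving $\Px(h_{w_k}\neq h_{w_{k-1}}) = O(2^{-k})$, to $(w_{k-1}, w_k)$; each contributes $O(\kappa \cdot 2^{-k})$ provided $c$ is chosen small there. Summing and taking $\kappa$ a small enough absolute constant gives $\Px(h_{w_k}\neq h_{w^{*}}) \leq 2^{-k-4}$ after a union bound over the $O(1)$ failure events.

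The hard part will be the simultaneous calibration of the constants $c_5,\ldots,c_{10}$ and $\kappa$ so that every error source (Chernoff slack on $|W_k|$, the drift correction in Lemma~\ref{lem:denoised-risk}, the localized uniform-convergence deviation, the $\kappa|W_k|$ optimization slack, and the two outside-band terms from Lemma~\ref{lem:margin-error-concentration}) contributes at most $2^{-k-6}$, while preserving the invariant $\|w^{*} - w_{k-1}\| \leq r_k$ needed to invoke Lemma~\ref{lem:denoised-risk} in the next round. The other technical challenge is establishing the variance-sensitive uniform convergence bound with the correct $d\Log$ rather than $d^2\Log$ dependence; without it, one only recovers the original statement of \citet{awasthi:13} rather than the $d$-improvement claimed here.
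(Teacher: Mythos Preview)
Your outline matches the paper in most structural respects (the geometric bound $\|w_{k-1}-w^*\|\le r_k$, the Chernoff control of $|W_k|$, Lemma~\ref{lem:opt-margin-loss} and Lemma~\ref{lem:denoised-risk} for the hinge loss of $w^*$, and the two applications of Lemma~\ref{lem:margin-error-concentration} outside the band), but the central step is handled differently. You propose a variance-sensitive uniform deviation bound for the hinge loss over $\{v:\|v-w_{k-1}\|\le r_k\}$, and attribute the $d$-saving to that. The paper does \emph{not} do this. Instead, it uses only a pointwise Hoeffding bound for the single vector $w^*$ to control $\sum_{(x,y)\in W_k}\ell_{\tau_k}(y(w^*\cdot x))$; optimality then bounds the empirical hinge loss of $v_k$; the inequality $\ind[h_{w_k}(x)\neq y]\le \ell_{\tau_k}(y(v_k\cdot x))$ converts this to an empirical $0$--$1$ error bound; a separate Chernoff bound on $\sum_s \ind[h_{w^*}(X_s)\neq Y_s]$ (the drift noise) plus a triangle inequality yields an empirical bound on $\sum_{(x,y)\in W_k}\ind[h_{w_k}(x)\neq h_{w^*}(x)]$; and only then is uniform convergence invoked, via Lemma~\ref{lem:vc-ratio}, for the \emph{$0$--$1$ loss}, whose range is $1$ rather than $O(\sqrt{d})$. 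That last switch is where the factor-of-$d$ improvement over \citet{awasthi:13} actually comes from.

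Your route could in principle work, but the claim that the variance of $\ell_{\tau_k}(y(v\cdot x))-\ell_{\tau_k}(y(w^*\cdot x))$ is $O(L^*)$ is not justified: the hinge-loss increments have range $O(r_k/\tau_k)=O(\sqrt d)$ inside the band, and the obvious bound $\E[D_v^2]\le \sqrt d\,\E[|D_v|]$ does not collapse to $O(L^*)$ without further structure. If you pursue this, that variance estimate is the real technical obstacle; the paper's approach sidesteps it entirely by moving to $0$--$1$ loss before invoking any uniform-over-the-class deviation inequality. You also omit the explicit step that converts error with respect to the drifting labels $Y_s$ into disagreement with $h_{w^*}$; the paper handles this by a Chernoff bound on the noise count $\sum_s\ind[h_{w^*}(X_s)\neq Y_s]\le 2e\Delta M_1 m_k+\log_2(3/\delta_k)$ followed by a triangle inequality.
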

\begin{proof}
By Lemma~\ref{lem:uniform-P-concentration}, and a Chernoff and union bound, 
for an appropriately large choice of $c_{5}$ and any $c_{7} > 0$,
letting $c_{2},c_{3}$ be as in Lemma~\ref{lem:uniform-P-concentration} (with $C=c_{7} \lor (c_{8}/2)$),
with probability at least $1-\delta_{k}/3$, 
\begin{equation}
\label{eqn:Wk-bounds}
c_{2} c_{7} 2^{-k} m_{k}
\leq |W_{k}| \leq
4 c_{3} c_{7} 2^{-k} m_{k}.
\end{equation}
The claimed upper bound on $|W_{k}|$ follows from this second inequality.

Next note that, if $\Px(x : h_{w_{k-1}}(x) \neq h_{w^{*}}(x)) \leq 2^{-k-3}$, 
then 
\begin{equation*}
\max\{ \ell_{\tau_{k}}(y (w^{*} \cdot x)) :  x \in \reals^{d}, |w_{k-1} \cdot x| \leq b_{k-1}, y \in \{-1,+1\} \} \leq c_{11} \sqrt{d}
\end{equation*}
for some universal constant $c_{11} > 0$.  
Furthermore, since $\Px(x : h_{w_{k-1}}(x) \neq h_{w^{*}}(x)) \leq 2^{-k-3}$, 
we know that the angle between $w_{k-1}$ and $w^{*}$ is at most $2^{-k-3} \pi$,
so that 
\begin{multline*}
\|w_{k-1} - w^{*}\| 
= \sqrt{ 2 - 2 w_{k-1} \cdot w^{*} } 
\leq \sqrt{ 2 - 2 \cos(2^{-k-3} \pi) } 
\\ \leq \sqrt{ 2 - 2 \cos^{2}(2^{-k-3} \pi) } 
= \sqrt{2} \sin(2^{-k-3} \pi) \leq 2^{-k-3} \pi \sqrt{2}.
\end{multline*}
For $c_{10} = \pi\sqrt{2} 2^{-3}$, this is $r_{k}$.
By Hoeffding's inequality (under the conditional distribution given $|W_{k}|$), the law of total probability,
Lemma~\ref{lem:denoised-risk}, and linearity of conditional expectations, 
with probability at least $1-\delta_{k}/3$, for $X \sim \Px$,
\begin{multline}
\label{eqn:opt-loss-bound}
\sum_{(x,y) \in W_{k}} \ell_{\tau_{k}}( y ( w^{*} \cdot x) )
\leq |W_{k}| \E\left[ \ell_{\tau_{k}}(|w^{*} \cdot X|) \Big| w_{k-1}, |w_{k-1} \cdot X| \leq b_{k-1} \right] 
\\ + c_{1} |W_{k}| \sqrt{2^{k} \change M_{1}} \frac{z_{k}}{\tau_{k}} 
+ \sqrt{ |W_{k}| (1/2) c_{11}^{2} d \ln(3/\delta_{k}) }.
\end{multline}
We bound each term on the right hand side separately.
By Lemma~\ref{lem:opt-margin-loss}, the first term is at most $|W_{k}|\frac{c_{3} \tau_{k}}{c_{2} b_{k-1}} = |W_{k}|\frac{c_{3} c_{8}}{2 c_{2} c_{7}}$.
Next, 
\begin{equation*}
\frac{z_{k}}{\tau_{k}} 
= \frac{\sqrt{c_{10}^{2} 2^{-2k}/(d-1) + 4 c_{7}^{2} 2^{-2k}/d}}{c_{8} 2^{-k} / \sqrt{d}}
\leq \frac{\sqrt{ 2c_{10}^{2} + 4 c_{7}^{2}}}{c_{8}},
\end{equation*}
while $2^{k} \leq 2/\alpha$
so that the second term is at most 
\begin{equation*}
\sqrt{2} c_{1} \frac{\sqrt{ 2c_{10}^{2} + 4 c_{7}^{2}}}{c_{8}} |W_{k}| \sqrt{ \frac{\change m}{\alpha} }.
\end{equation*}
Noting that 
\begin{equation}
\label{eqn:m-bound}
M_{1} = \sum_{k^{\prime}=1}^{\lceil \log_{2}(1/\alpha) \rceil} m_{k^{\prime}} 
\leq \frac{32 c_{5}}{\kappa^{2}} \frac{1}{\alpha} d \log\left(\frac{1}{\kappa\delta}\right),
\end{equation}
we find that the second term on the right hand side of \eqref{eqn:opt-loss-bound} is at most 
\begin{equation*}
\sqrt{\frac{c_{5}}{c_{9}}} \frac{8 c_{1}}{\kappa} \frac{\sqrt{ 2c_{10}^{2} + 4 c_{7}^{2}}}{c_{8}} |W_{k}| \sqrt{ \frac{\change d \log\left(\frac{1}{\kappa\delta}\right)}{\alpha^{2}} }
= \frac{8 c_{1} \sqrt{c_{5}}}{\kappa} \frac{\sqrt{ 2c_{10}^{2} + 4 c_{7}^{2}}}{c_{8}c_{9}} |W_{k}|.
\end{equation*}
Finally, since $d \ln(3/\delta_{k}) \leq 2 d \ln(1/\delta_{k}) \leq \frac{2 \kappa^{2}}{c_{5}} 2^{-k} m_{k}$,
and \eqref{eqn:Wk-bounds} implies $2^{-k} m_{k} \leq \frac{1}{c_{2} c_{7}} |W_{k}|$, 
the third term on the right hand side of \eqref{eqn:opt-loss-bound} is at most
\begin{equation*}
|W_{k}| \frac{c_{11} \kappa}{ \sqrt{c_{2} c_{5} c_{7}} }.
\end{equation*}
Altogether, we have
\begin{equation*}
\sum_{(x,y) \in W_{k}} \ell_{\tau_{k}}( y ( w^{*} \cdot x) )
\leq |W_{k}| \left( 
\frac{c_{3} c_{8}}{2 c_{2} c_{7}} 
+  \frac{8 c_{1} \sqrt{c_{5}}}{\kappa} \frac{\sqrt{ 2c_{10}^{2} + 4 c_{7}^{2}}}{c_{8}c_{9}}
+ \frac{c_{11} \kappa}{ \sqrt{c_{2} c_{5} c_{7}} }\right).
\end{equation*}
Taking $c_{9} = 1/\kappa^{3}$ and $c_{8} = \kappa$, this is at most
\begin{equation*}
\kappa |W_{k}| \left( 
\frac{c_{3}}{2 c_{2} c_{7}} 
+  8 c_{1} \sqrt{c_{5}}\sqrt{ 2c_{10}^{2} + 4 c_{7}^{2}}
+ \frac{c_{11}}{ \sqrt{c_{2} c_{5} c_{7}} }\right).
\end{equation*}

Next, note that because $h_{w_{k}}(x) \neq y \Rightarrow \ell_{\tau_{k}}(y (v_{k} \cdot x)) \geq 1$, 
and because (as proven above) $\|w^{*} - w_{k-1}\| \leq r_{k}$,
\begin{equation*}
|W_{k}| \er_{W_{k}}( h_{w_{k}} ) 
\leq \sum_{(x,y) \in W_{k}} \ell_{\tau_{k}}(y (v_{k} \cdot x))
\leq \sum_{(x,y) \in W_{k}} \ell_{\tau_{k}}(y (w^{*} \cdot x)) + \kappa |W_{k}|.
\end{equation*}
Combined with the above, we have
\begin{equation*}
|W_{k}| \er_{W_{k}}( h_{w_{k}} ) 
\leq \kappa |W_{k}| \left( 
1 + \frac{c_{3}}{2 c_{2} c_{7}} 
+  8 c_{1} \sqrt{c_{5}}\sqrt{ 2c_{10}^{2} + 4 c_{7}^{2}}
+ \frac{c_{11}}{ \sqrt{c_{2} c_{5} c_{7}} }\right).
\end{equation*}
Let $c_{12} = 1 + \frac{c_{3}}{2 c_{2} c_{7}} +  8 c_{1} \sqrt{c_{5}}\sqrt{ 2c_{10}^{2} + 4 c_{7}^{2}} + \frac{c_{11}}{ \sqrt{c_{2} c_{5} c_{7}} }$.
Furthermore, 
\begin{multline*}
|W_{k}|\er_{W_{k}}( h_{w_{k}} ) 
= \sum_{(x,y) \in W_{k}} \ind[ h_{w_{k}}(x) \neq y ]
\\ \geq \sum_{(x,y) \in W_{k}} \ind[ h_{w_{k}}(x) \neq h_{w^{*}}(x) ]  - \sum_{(x,y) \in W_{k}} \ind[ h_{w^{*}}(x) \neq y ].
\end{multline*}
For an appropriately large value of $c_{5}$, 
by a Chernoff bound, with probability at least $1-\delta_{k}/3$, 
\begin{equation*}
\sum_{s=t+\sum_{j=0}^{k-1}m_{j} + 1}^{t+\sum_{j=0}^{k} m_{j}} \ind[ h_{w^{*}}(X_{s}) \neq Y_{s} ] 
\leq 2 e \change M_{1} m_{k} + \log_{2}(3/\delta_{k}).
\end{equation*}
In particular, this implies 
\begin{equation*}
\sum_{(x,y) \in W_{k}} \ind[ h_{w^{*}}(x) \neq y ]
\leq 2 e \change M_{1} m_{k} + \log_{2}(3/\delta_{k}),
\end{equation*}
so that 
\begin{equation*}
\sum_{(x,y) \in W_{k}} \ind[ h_{w_{k}}(x) \neq h_{w^{*}}(x) ]
\leq |W_{k}|\er_{W_{k}}( h_{w_{k}} ) + 2 e \change M_{1} m_{k} + \log_{2}(3/\delta_{k}).
\end{equation*}
Noting that \eqref{eqn:m-bound} and \eqref{eqn:Wk-bounds} imply 
\begin{align*}
\change M_{1} m_{k} & \leq \change \frac{32 c_{5}}{\kappa^{2}} \frac{ d \log\left(\frac{1}{\kappa\delta}\right) }{c_{9} \sqrt{ \change d \log\left(\frac{1}{\kappa\delta}\right)}} \frac{2^{k}}{c_{2} c_{7}} |W_{k}|
\leq \frac{32 c_{5}}{c_{2} c_{7} c_{9} \kappa^{2}} \sqrt{ \change d \log\left(\frac{1}{\kappa\delta}\right) } 2^{k} |W_{k}|
\\ & = \frac{32 c_{5}}{c_{2} c_{7} c_{9}^{2} \kappa^{2}} \alpha 2^{k} |W_{k}|
= \frac{32 c_{5} \kappa^{4}}{c_{2} c_{7}} \alpha 2^{k} |W_{k}|
\leq \frac{32 c_{5} \kappa^{4}}{c_{2} c_{7}} |W_{k}|,
\end{align*}
and \eqref{eqn:Wk-bounds} implies $\log_{2}(3/\delta_{k}) \leq \frac{2\kappa^{2}}{c_{2}c_{5}c_{7}}|W_{k}|$,
altogether we have
\begin{align*}
\sum_{(x,y) \in W_{k}} \ind[ h_{w_{k}}(x) \neq h_{w^{*}}(x) ]
& \leq |W_{k}|\er_{W_{k}}( h_{w_{k}} ) + \frac{64 e c_{5} \kappa^{4}}{c_{2} c_{7}} |W_{k}| + \frac{2\kappa^{2}}{c_{2}c_{5}c_{7}}|W_{k}|
\\ & \leq \kappa |W_{k}| \left( c_{12} + \frac{64 e c_{5} \kappa^{3}}{c_{2} c_{7}} + \frac{2\kappa}{c_{2}c_{5}c_{7}} \right).
\end{align*}
Letting $c_{13} = c_{12} + \frac{64 e c_{5}}{c_{2} c_{7}} + \frac{2}{c_{2}c_{5}c_{7}}$, and noting $\kappa \leq 1$, 
we have
$\sum_{(x,y) \in W_{k}} \ind[ h_{w_{k}}(x) \neq h_{w^{*}}(x) ] \leq c_{13} \kappa |W_{k}|$.

Lemma~\ref{lem:vc-ratio} (applied under the conditional distribution given $|W_{k}|$)
and the law of total probability imply that with probability at least $1-\delta_{k}/3$,
\begin{align*}
|W_{k}| &\Px\left( x : h_{w_{k}}(x) \neq h_{w^{*}}(x) \Big| |w_{k-1} \cdot x| \leq b_{k-1}\right)
\\ & \leq \sum_{(x,y) \in W_{k}} \ind[ h_{w_{k}}(x) \neq h_{w^{*}}(x)]
+ c_{14} \sqrt{ |W_{k}| (d \log(|W_{k}|/d) + \log(1/\delta_{k})) },
\end{align*}
for a universal constant $c_{14} > 0$.
Combined with the above, and the fact that \eqref{eqn:Wk-bounds} implies 
$\log(1/\delta_{k}) \leq \frac{\kappa^{2}}{c_{2}c_{5}c_{7}}|W_{k}|$
and 
\begin{align*}
d \log(|W_{k}|/d) & \leq d \log\left(\frac{8c_{3}c_{5}c_{7} \log\left(\frac{1}{\kappa\delta_{k}}\right)}{\kappa^{2}}\right)
\\ & \leq d \log\left(\frac{8 c_{3} c_{5} c_{7}}{\kappa^{3} \delta_{k}}\right) 
\leq 3\log(8 \max\{c_{3},1\} c_{5} ) c_{5} d \log\left(\frac{1}{\kappa \delta_{k}}\right)
\\ & \leq 3 \log(8 \max\{c_{3},1\}) \kappa^{2} 2^{-k} m_{k} 
\leq \frac{3 \log(8 \max\{c_{3},1\})}{c_{2} c_{7}} \kappa^{2} |W_{k}|,
\end{align*}
we have 
\begin{align*}
|W_{k}| & \Px\left( x : h_{w_{k}}(x) \neq h_{w^{*}}(x) \Big| |w_{k-1} \cdot x| \leq b_{k-1}\right)
\\ & \leq c_{13} \kappa |W_{k}|
+ c_{14} \sqrt{ |W_{k}| \left( \frac{3 \log(8 \max\{c_{3},1\})}{c_{2} c_{7}} \kappa^{2} |W_{k}| + \frac{\kappa^{2}}{c_{2}c_{5}c_{7}}|W_{k}| \right)}
\\ & = \kappa |W_{k}| \left( c_{13} + c_{14} \sqrt{ \frac{3 \log(8 \max\{c_{3},1\})}{c_{2} c_{7}} + \frac{1}{c_{2}c_{5}c_{7}}}\right).
\end{align*}
Thus, letting $c_{15} = \left( c_{13} + c_{14} \sqrt{ \frac{3 \log(8 \max\{c_{3},1\})}{c_{2} c_{7}} + \frac{1}{c_{2}c_{5}c_{7}}}\right)$,
we have
\begin{equation}
\label{eqn:conditional-error-bound}
\Px\left( x : h_{w_{k}}(x) \neq h_{w^{*}}(x) \Big| |w_{k-1} \cdot x| \leq b_{k-1}\right)
\leq c_{15} \kappa.
\end{equation}

Next, note that $\|v_{k} - w_{k-1}\|^{2} = \|v_{k}\|^{2} + 1 - 2 \|v_{k}\| \cos( \pi \Px(x : h_{w_{k}}(x) \neq h_{w_{k-1}}(x)) )$.
Thus, one implication of the fact that $\|v_{k} - w_{k-1}\| \leq r_{k}$ is that 
$\frac{\|v_{k}\|}{2} + \frac{1-r_{k}^{2}}{2\|v_{k}\|} \leq \cos( \pi \Px(x : h_{w_{k}}(x) \neq h_{w_{k-1}}(x)) )$;
since the left hand side is positive, we have $\Px(x : h_{w_{k}}(x) \neq h_{w_{k-1}}(x)) < 1/2$. 
Additionally, by differentiating, one can easily verify that for $\phi \in [0,\pi]$, 
$x \mapsto \sqrt{ x^{2} + 1 - 2 x \cos(\phi) }$ is minimized at $x=\cos(\phi)$, 
in which case $\sqrt{x^{2} + 1 - 2 x \cos(\phi) } = \sin(\phi)$.
Thus, $\|v_{k} - w_{k-1}\| \geq \sin( \pi \Px(x : h_{w_{k}}(x) \neq h_{w_{k-1}}(x) ) )$.
Since $\|v_{k} - w_{k-1}\| \leq r_{k}$, 
we have $\sin(\pi \Px(x : h_{w_{k}}(x) \neq h_{w_{k-1}}(x))) \leq r_{k}$.
Since $\sin(\pi x) \geq x$ for all $x \in [0,1/2]$, 
combining this with the fact (proven above) that $\Px(x : h_{w_{k}}(x) \neq h_{w_{k-1}}(x)) < 1/2$
implies $\Px(x : h_{w_{k}}(x) \neq h_{w_{k-1}}(x)) \leq r_{k}$.

In particular, we have that both $\Px(x : h_{w_{k}}(x) \neq h_{w_{k-1}}(x)) \leq r_{k}$ and $\Px(x : h_{w^{*}}(x) \neq h_{w_{k-1}}(x)) \leq 2^{-k-3} \leq r_{k}$.
Now Lemma~\ref{lem:margin-error-concentration} implies that, for any universal constant $c > 0$, 
there exists a corresponding universal constant $c^{\prime} > 0$ such that
\begin{equation*}
\Px\left(x : h_{w_{k}}(x) \neq h_{w_{k-1}}(x) \text{ and } |w_{k-1} \cdot x| \geq c^{\prime} \frac{r_{k}}{\sqrt{d}} \right) \leq c r_{k}
\end{equation*}
and
\begin{equation*}
\Px\left(x : h_{w^{*}}(x) \neq h_{w_{k-1}}(x) \text{ and } |w_{k-1} \cdot x| \geq c^{\prime} \frac{r_{k}}{\sqrt{d}} \right) \leq c r_{k},
\end{equation*}
so that (by a union bound)
\begin{align*}
& \Px\left(x : h_{w_{k}}(x) \neq h_{w^{*}}(x) \text{ and } |w_{k-1} \cdot x| \geq c^{\prime} \frac{r_{k}}{\sqrt{d}} \right) 
\\ & \leq 
\Px\left(x : h_{w_{k}}(x) \neq h_{w_{k-1}}(x) \text{ and } |w_{k-1} \cdot x| \geq c^{\prime} \frac{r_{k}}{\sqrt{d}} \right) 
\\ & +
\Px\left(x : h_{w^{*}}(x) \neq h_{w_{k-1}}(x) \text{ and } |w_{k-1} \cdot x| \geq c^{\prime} \frac{r_{k}}{\sqrt{d}} \right) 
\leq 2 c r_{k}.
\end{align*}
In particular, letting $c_{7} = c^{\prime} c_{10} / 2$, we have $c^{\prime} \frac{r_{k}}{\sqrt{d}} = b_{k-1}$.
Combining this with \eqref{eqn:conditional-error-bound}, Lemma~\ref{lem:uniform-P-concentration}, and a union bound, we have that
\begin{align*}
& \Px\left( x : h_{w_{k}}(x) \neq h_{w^{*}}(x)\right)
\\ & \leq \Px\left(x : h_{w_{k}}(x) \neq h_{w^{*}}(x) \text{ and } |w_{k-1} \cdot x| \geq b_{k-1} \right) 
\\ & {\hskip 3mm}+ \Px\left(x : h_{w_{k}}(x) \neq h_{w^{*}}(x) \text{ and } |w_{k-1} \cdot x| \leq b_{k-1} \right) 
\\ & \leq 2 c r_{k} + \Px\left( x : h_{w_{k}}(x) \neq h_{w^{*}}(x) \Big| |w_{k-1} \cdot x| \leq b_{k-1} \right) \Px\left(x : |w_{k-1} \cdot x| \leq b_{k-1}\right)
\\ & \leq 2 c r_{k} + c_{15} \kappa c_{3} b_{k-1} \sqrt{d}
= \left( 2^{5} c c_{10} + c_{15} \kappa c_{3} c_{7} 2^{5} \right) 2^{-k-4}.
\end{align*}
Taking $c = \frac{1}{2^{6} c_{10}}$ and $\kappa = \frac{1}{2^{6} c_{3} c_{7} c_{15}}$,
we have $\Px(x : h_{w_{k}}(x) \neq h_{w^{*}}(x)) \leq 2^{-k-4}$, as required.

By a union bound, this occurs with probability at least $1 - (4/3)\delta_{k}$.
\qed
\end{proof}

\begin{proof}[Proof of Theorem~\ref{thm:linsep-uniform}]
We begin with the bound on the error rate.
If $\change > \frac{\pi^{2}}{400 \cdot 2^{27} (d+\ln(4/\delta))}$, the result trivially holds, since then $1 \leq \frac{400 \cdot 2^{27}}{\pi^{2}} \sqrt{\change (d+\ln(4/\delta))}$.
Otherwise, suppose $\change \leq \frac{\pi^{2}}{400 \cdot 2^{27} (d+\ln(4/\delta))}$.

Fix any $i \in \nats$.
Lemma~\ref{lem:perceptron-init} implies that, with probability at least $1-\delta/4$, 
the $w_{0}$ returned in Step 0 of ${\rm ABL}(M(i-1),\tilde{h}_{i-1})$ satisfies
$\Px(x : h_{w_{0}}(x) \neq h_{M(i-1) + m_{0}+1}^{*}(x)) \leq 1/16$.  
Taking this as a base case, Lemma~\ref{lem:margin-error-bound} then inductively implies that, 
with probability at least 
\begin{multline*}
1 - \frac{\delta}{4} - \sum_{k=1}^{\lceil \log_{2}(1/\alpha) \rceil} (4/3) \frac{\delta}{2(\lceil \log_{2}(4/\alpha) \rceil - k)^{2}}
\geq 1 - \frac{\delta}{2} \left(1 + (4/3) \sum_{\ell=2}^{\infty} \frac{1}{\ell^{2}} \right) 
\geq 1 - \delta,
\end{multline*}
every $k \in \{ 0, 1, \ldots, \lceil \log_{2}(1/\alpha) \rceil \}$ has
\begin{equation}
\label{eqn:abl-mistake-prob-raw}
\Px(x : h_{w_{k}}(x) \neq h_{M(i-1)+m_{0}+1}^{*}(x)) \leq 2^{-k-4},
\end{equation}
and furthermore the number of labels requested during ${\rm ABL}(M(i-1),\tilde{h}_{i-1})$ total to at most (for appropriate universal constants $\hat{c}_{1},\hat{c}_{2}$)
\begin{align*}
m_{0} + \!\!\!\!\sum_{k=1}^{\lceil \log_{2}(1/\alpha) \rceil} |W_{k}|
& \leq \hat{c}_{1} \left(d + \ln\left(\frac{1}{\delta}\right) + \sum_{k=1}^{\lceil \log_{2}(1/\alpha) \rceil}  d \log\left(\frac{( \lceil \log_{2}(4/\alpha) \rceil - k )^{2}}{\delta}\right) \right)
\\ & \leq \hat{c}_{2} d \log\left(\frac{1}{\change d}\right)\log\left(\frac{1}{\delta}\right).
\end{align*}
In particular, by a union bound, \eqref{eqn:abl-mistake-prob-raw} implies that for every $k \in \{1,\ldots,\lceil \log_{2}(1/\alpha) \rceil\}$,
every 
\begin{equation*}
m \in \left\{ M(i-1) + \sum_{j=0}^{k-1} m_{j} + 1, \ldots, M(i-1) + \sum_{j=0}^{k} m_{j} \right\}
\end{equation*}
has
\begin{align*}
& \Px(x : h_{w_{k-1}}(x) \neq h_{m}^{*}(x)) 
\\ & \leq \Px(x : h_{w_{k-1}}(x) \neq h_{M(i-1)+m_{0}+1}^{*}(x)) + \Px(x : h_{M(i-1)+m_{0}+1}^{*}(x) \neq h_{m}^{*}(x)) 
\\ & \leq 2^{-k-3} + \change M.
\end{align*}
Thus, noting that
\begin{align*}
M & = \sum_{k=0}^{\lceil \log_{2}(1/\alpha) \rceil} m_{k}
= \Theta\left( d + \log\left(\frac{1}{\delta}\right) + \sum_{k=1}^{\lceil \log_{2}(1/\alpha) \rceil} 2^{k} d \log\left(\frac{\lceil \log_{2}(1/\alpha) \rceil - k}{\delta}\right) \right)
\\ & = \Theta\left( \frac{1}{\alpha} d \log\left(\frac{1}{\delta}\right) \right)
= \Theta\left(\sqrt{\frac{d}{\change} \log\left(\frac{1}{\delta}\right)} \right),
\end{align*}
with probability at least $1-\delta$,
\begin{equation*}
\Px(x : h_{w_{\lceil \log_{2}(1/\alpha) \rceil-1}}(x) \neq \target_{M i}(x) ) \leq O\left( \alpha + \change M \right) = O\left( \sqrt{ \change d \log\left(\frac{1}{\delta}\right) } \right).
\end{equation*} 
In particular, this implies that, with probability at least $1-\delta$, every $t \in \{M i + 1, \ldots, M (i+1)-1\}$ has
\begin{align*}
\er_{t}(\hat{h}_{t}) & \leq \Px(x : h_{w_{\lceil \log_{2}(1/\alpha) \rceil-1}}(x) \neq \target_{M i}(x) ) + \Px( x : \target_{M i}(x) \neq \target_{t}(x) )
\\ & \leq O\left( \sqrt{ \change d \log\left(\frac{1}{\delta}\right) } \right) + \change M
= O\left( \sqrt{ \change d \log\left(\frac{1}{\delta}\right) } \right),
\end{align*}
which completes the proof of the bound on the error rate.

Setting $\delta = \sqrt{\change d}$, and noting that $\ind[ \hat{Y}_{t} \neq Y_{t} ] \leq 1$, we have that for any $t > M$, 
\begin{equation*}
\P\left( \hat{Y}_{t} \neq Y_{t} \right) 
= \E\left[ \er_{t}(\hat{h}_{t}) \right] 
\leq O\left( \sqrt{ \change d \log\left(\frac{1}{\delta}\right) } \right) + \delta 
= O\left( \sqrt{ \change d \log\left(\frac{1}{\change d}\right) } \right).
\end{equation*}
Thus, by linearity of the expectation, 
\begin{equation*}
\E\left[ \sum_{t=1}^{T} \ind\left[ \hat{Y}_{t} \neq Y_{t} \right] \right]
\leq M + O\left( \sqrt{ \change d \log\left(\frac{1}{\change d}\right) } T \right)
= O\left( \sqrt{ \change d \log\left(\frac{1}{\change d}\right) } T \right).
\end{equation*}
Furthermore, as mentioned, with probability at least $1-\delta$, 
the number of labels requested during the execution of ${\rm ABL}(M(i-1),\tilde{h}_{i-1})$ is at most
\begin{equation*}
O\left( d \log\left(\frac{1}{\change d}\right)\log\left(\frac{1}{\delta}\right) \right).
\end{equation*}
Thus, since the number of labels requested during the execution of ${\rm ABL}(M(i-1),\tilde{h}_{i-1})$ cannot exceed $M$, 
letting $\delta = \sqrt{\change d}$, the expected number of requested labels during this execution is at most
\begin{align*}
O\left( d \log^{2}\left(\frac{1}{\change d}\right) \right) + \sqrt{\change d} M
& \leq O\left( d \log^{2}\left(\frac{1}{\change d}\right) \right) + O\left( d \sqrt{\log\left(\frac{1}{\change d}\right) } \right)
\\ & = O\left( d \log^{2}\left(\frac{1}{\change d}\right) \right).
\end{align*}
Thus, by linearity of the expectation, the expected number of labels requested among the first $T$ samples is at most
\begin{equation*}
O\left( d \log^{2}\left(\frac{1}{\change d}\right) \left\lceil \frac{T}{M} \right\rceil \right)
= O\left( \sqrt{\change d} \log^{3/2}\left(\frac{1}{\change d}\right) T \right),
\end{equation*}
which completes the proof.
\qed
\end{proof}

\paragraph{Remark:} The original work of \citet{min_concept} additionally allowed for some number $K$ of ``jumps'':
times $t$ at which $\change_{t} = 1$.  Note that, in the above algorithm, since the influence of each sample is localized to the predictors trained
within that ``batch'' of $M$ instances, the effect of allowing such jumps would only change the bound on the number of
mistakes to $\tilde{O}\left(\sqrt{d \change} T + \sqrt{\frac{d}{\change}} K \right)$.  This compares favorably to the 
result of \citet{min_concept}, which is roughly $O\left( (d \change)^{1/4} T + \frac{d^{1/4}}{\change^{3/4}} K \right)$.
However, the result of \citet{min_concept} was proven for a more general setting, allowing distributions $\Px$
that are not uniform (though they do require a relation between the angle between any two separators and the 
probability mass they disagree on, similar to that holding for the uniform distribution, which seems to require that the 
distributions approximately retain some properties of the uniform distribution).  It is not clear whether Theorem~\ref{thm:linsep-uniform} can be
generalized to this larger family of distributions.

\section{General Results for Active Learning}
\label{sec:general-active}

As mentioned, the above results on linear separators also provide results 
for the number of queries in \emph{active learning}.  One can also state
quite general results on the expected number of queries and mistakes
achievable by an active learning algorithm.  
This section provides such results, for an algorithm based on the 
the well-known strategy of \emph{disagreement-based} active learning.
Throughout this section, we suppose $\targetseq \in S_{\change}$,
for a given $\change \in (0,1]$: that is, $\Px( x : \target_{t+1}(x) \neq \target_{t}(x)) \leq \change$
for all $t \in \nats$.

First, we introduce a few definitions.
For any set $\H \subseteq \C$, define the \emph{region of disagreement}
\begin{equation*}
\DIS(\H) = \{x \in \X : \exists h,g \in \H \text{ s.t. } h(x) \neq g(x) \}.
\end{equation*}
The analysis in this section is centered around the following algorithm.
The ${\rm Active}$ subroutine is from the work of \citet{hanneke:activized} (slightly modified here),
and is a variant of the $A^2$ (Agnostic Acive) algorithm of \citet{A2};
the appropriate values of $M$ and $\hat{T}_{k}(\cdot)$ will be discussed below.

\begin{bigboxit}
Algorithm: ${\rm DriftingActive}$\\
0. For $i = 1,2,\ldots$\\
1. \quad ${\rm Active}(M (i-1))$\\
\end{bigboxit}
\begin{bigboxit}
Subroutine: ${\rm Active}(t)$\\
0. Let $\hat{h}_{0}$ be an arbitrary element of $\C$, and let $V_{0} \gets \C$\\
1. Predict $\hat{Y}_{t+1} = \hat{h}_{0}(X_{t+1})$ as the prediction for the value of $Y_{t+1}$\\
2. For $k = 0,1,\ldots,\log_{2}(M/2)$\\
3. \quad $Q_{k} \gets \{\}$\\
4. \quad For $s = 2^{k}+1,\ldots,2^{k+1}$\\
5. \qquad Predict $\hat{Y}_{s} = \hat{h}_{k}(X_{s})$ as the prediction for the value of $Y_{s}$\\
6. \qquad If $X_{s} \in \DIS(V_{k})$\\
7. \quad\qquad Request the label $Y_{s}$ and let $Q_{k} \gets Q_{k} \cup \{(X_{s},Y_{s})\}$\\
8. \quad Let $\hat{h}_{k+1} = \argmin_{h \in V_{k}} \sum_{(x,y) \in Q_{k}} \ind[h(x) \neq y]$\\
9. \quad Let $V_{k+1} \gets \{h \in V_{k} : \sum_{(x,y) \in Q_{k}} \ind[h(x) \neq y] - \ind[\hat{h}_{k+1}(x) \neq y] \leq \hat{T}_{k}\}$
\end{bigboxit}

As in the ${\rm DriftingHalfspaces}$ algorithm above, this ${\rm DriftingActive}$
algorithm proceeds in batches, and in each batch runs an active learning algorithm
designed to be robust to classification noise.  This robustness to classification noise
translates into our setting as tolerance for the fact that there is no classifier in $\C$
that perfectly classifies all of the data.  The specific algorithm employed here maintains
a set $V_{k} \subseteq \C$ of candidate classifiers, and requests the labels of samples $X_{s}$
for which there is some disagreement on the classification among classifiers in $V_{k}$.
We maintain the invariant that there is a low-error classifier contained in $V_{k}$ at all
times, and thus the points we query provide some information to help us determine
which among these remaining candidates has low error rate.  Based on these queries,
we periodically (in Step 9) remove from $V_{k}$ those classifiers making a relatively excessive 
number of mistakes on the queried samples, relative to the minimum among classifiers in $V_{k}$.
All predictions are made with an element of $V_{k}$.\footnote{One could alternatively proceed
as in ${\rm DriftingHalfspaces}$, using the final classifier from the previous batch, which 
would also add a guarantee on the error rate achieved at all sufficiently large $t$.}

We prove an abstract bound on the number of labels requested by this algorithm,
expressed in terms of the \emph{disagreement coefficient} \citep{hanneke:07b},
defined as follows.  For any $r \geq 0$ and any classifier $h$, define $\Ball(h,r) = \{g \in \C : \Px(x : g(x) \neq h(x)) \leq r\}$.
Then for $r_{0} \geq 0$ and any classifier $h$, define the disagreement coefficient of $h$ with respect to $\C$ under $\Px$:
\begin{equation*}
\dc_{h}(r_{0}) = \sup_{r > r_{0}} \frac{ \Px( \DIS( \Ball( h, r ) ) ) }{r}.
\end{equation*}
Usually, the disagreement coefficient would be used with $h$ equal the target concept;
however, since the target concept is not fixed in our setting, 
we will make use of the worst-case value of the disagreement coefficient:
$\dc_{\C}(r_{0}) = \sup_{h \in \C} \dc_{h}(r_{0})$.
This quantity has been bounded for a variety of spaces $\C$ and distributions $\Px$
(see e.g., \cite{hanneke:07b,el-yaniv:12,balcan:13}).
It is useful in bounding how quickly the region $\DIS(V_{k})$ collapses in the 
algorithm.  Thus, since the probability the algorithm requests the label of the next instance
is $\Px(\DIS(V_{k}))$, the quantity $\dc_{\C}(r_{0})$ naturally arises in characterizing the
number of labels we expect this algorithm to request.
Specifically, we have the following result.\footnote{Here,
we define $\lceil x \rceil_{2} = 2^{\lceil \log_{2}(x) \rceil}$, for $x \geq 1$.}

\begin{theorem}
\label{thm:general-active}
For an appropriate universal constant $c_{1} \in [1,\infty)$, 
if $\targetseq \in S_{\change}$ for some $\change \in (0,1]$,
then taking $M = \left\lceil c_{1} \sqrt{\frac{\vc}{\change}} \right\rceil_{2}$,
and $\hat{T}_{k} = \log_{2}(1/\sqrt{\vc \change}) + 2^{2k+2} e \change$,
and defining $\epsilon_{\change} = \sqrt{\vc\change} \Log(1/(\vc\change))$,
the above ${\rm DriftingActive}$ algorithm makes an expected number of mistakes among the 
first $T$ instances that is 
\begin{equation*}
O\left(\epsilon_{\change} \Log(\vc/\change) T \right) = \tilde{O}\left( \sqrt{\vc\change} \right) T
\end{equation*}
and requests an expected number of labels among the first $T$ instances that is 
\begin{equation*}
O\left( \dc_{\C}( \epsilon_{\change} ) \epsilon_{\change} \Log(\vc/\change) T \right) = \tilde{O}\left( \dc_{\C}(\sqrt{\vc \change}) \sqrt{\vc \change} \right) T.
\end{equation*}
\end{theorem}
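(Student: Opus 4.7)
The plan is to analyze one call to $\mathrm{Active}(t)$, treating $h^{*} := \target_{t+1}$ as a fixed reference target within that batch, and then sum the per-batch bounds over the $\lceil T/M\rceil$ batches. This is a drift-aware variant of the $A^{2}$-style analysis, with two sources of error to control at each round $k$: Bernstein/VC concentration, and the drift of $\target_{s}$ away from $h^{*}$ for $s$ in the batch. Throughout, I would invoke the per-round high-probability guarantees at confidence $\delta_{k}\propto\epsilon_{\change}/k^{2}$, so that integrating over the low-probability failure events contributes at most $O(\epsilon_{\change}\cdot M)$ additional mistakes/queries per batch.

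The first step is to reinterpret drift as label noise relative to $h^{*}$. In round $k$, $s\in\{t+2^{k}+1,\ldots,t+2^{k+1}\}$, so $\Px(x:\target_{s}(x)\neq h^{*}(x)) \leq (s-t-1)\change \leq 2^{k+1}\change$ and $\E\bigl[\sum_{s}\ind[Y_{s}\neq h^{*}(X_{s})]\bigr] \leq 2^{2k+1}\change$. Bernstein's inequality then gives, with probability at least $1-\delta_{k}$, an actual ``noise'' count of at most $2^{2k+2}e\change + O(\log(1/\delta_{k}))$, which the chosen threshold $\hat{T}_{k}$ is precisely designed to absorb. Since $\hat{h}_{k+1}$ minimizes the empirical loss on $Q_{k}$ over $V_{k}$, this certifies that the empirical excess loss of $h^{*}$ on $Q_{k}$ is at most $\hat{T}_{k}$, so $h^{*}\in V_{k+1}$; iterating, the invariant $h^{*}\in V_{k}$ holds simultaneously for all $k\leq\log_{2}(M/2)$ with probability at least $1-\sum_{k}\delta_{k}$.

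Next, I would bound the radius $r_{k}:=\sup_{h\in V_{k}}\Px(x:h(x)\neq h^{*}(x))$. Using the identity $\ind[h(x)\neq y]-\ind[h^{*}(x)\neq y] = \ind[h(x)\neq h^{*}(x)]\bigl(1-2\ind[y\neq h^{*}(x)]\bigr)$ and the fact that any $x$ on which two members of $V_{k}$ disagree necessarily lies in $\DIS(V_{k})$ (so the round-$k$ empirical disagreement of $h,h^{*}\in V_{k}$ equals the disagreement count on $Q_{k}$), the defining inequality of $V_{k+1}$ yields $\sum_{(x,y)\in Q_{k}}\ind[h(x)\neq h^{*}(x)] = O(\hat{T}_{k})$ for every $h\in V_{k+1}$. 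Applying Lemma~\ref{lem:vc-ratio} to the pair class $\{\{x:h(x)\neq g(x)\}:h,g\in\C\}$ on the $2^{k}$ round-$k$ samples and solving the resulting quadratic in $\Px(h\neq h^{*})$ gives $r_{k+1} = O\bigl(2^{k}\change + (\vc\Log(2^{k}/\vc)+\log(1/\delta_{k}))/2^{k}\bigr)$.

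With these bounds in hand, the remaining calculation is a geometric-style summation. Since predictions at round $k$ use $\hat{h}_{k}\in V_{k}$, the expected mistake count in round $k$ is at most $2^{k}(r_{k}+2^{k+1}\change)$, whose sum over $k$ collapses to $O(\vc\Log(\vc/\change))$ per batch (using $M^{2}\change\asymp\vc$ from the choice of $M$); dividing by $M$ gives the per-instance rate $\tilde{O}(\sqrt{\vc\change})$. For queries, I would use the invariant $V_{k}\subseteq\Ball(h^{*},r_{k})$ together with monotonicity of $\dc_{\C}$ to obtain $\Px(\DIS(V_{k})) \leq \dc_{\C}(\epsilon_{\change})\cdot(r_{k}\lor\epsilon_{\change})$, so that $\sum_{k}2^{k}\Px(\DIS(V_{k})) \leq \dc_{\C}(\epsilon_{\change})\cdot O\bigl(\vc\Log(\vc/\change) + M\epsilon_{\change}\bigr)$ per batch; dividing by $M$ yields the advertised query bound. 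The main technical obstacle is the calibration of $\hat{T}_{k}$: it must simultaneously be (i) large enough to absorb drift noise so that $h^{*}$ survives in $V_{k}$, and (ii) small enough that $r_{k}$ shrinks geometrically until the $2^{k}\change$ term takes over, with the crossover occurring near $k\approx\log_{2}M$ so that the $\sum 2^{k}r_{k}$ telescoping lands exactly at $\vc\Log(\vc/\change)$; the chosen form $\log_{2}(1/\sqrt{\vc\change}) + 2^{2k+2}e\change$ is tuned to do precisely this.
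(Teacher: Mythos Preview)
Your proposal is correct and follows essentially the same approach as the paper. The paper packages the per-round guarantee into a lemma showing that, with probability at least $1-2\sqrt{\vc\change}$, if $\target_{t+1}\in V_k$ then $\target_{t+1}\in V_{k+1}$ and $\sup_{h\in V_{k+1}}\Px(x:h(x)\neq\target_{t+1}(x))\leq c_{2}2^{-k}\vc\Log(c_{1}/\sqrt{\vc\change})$, and then sums exactly as you describe; the only cosmetic differences are that the paper uses a fixed per-round failure probability $\sqrt{\vc\change}$ (rather than your $\delta_{k}\propto\epsilon_{\change}/k^{2}$) and a direct triangle inequality in place of your indicator identity, both yielding the same $3\hat{T}_{k}$ disagreement count.
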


The proof of Theorem~\ref{thm:general-active} relies on an analysis of the behavior of the ${\rm Active}$ subroutine,
characterized in the following lemma.

\begin{lemma}
\label{lem:active-subroutine}
Fix any $t \in \nats$, and consider the values obtained in the execution of ${\rm Active}(t)$.
Under the conditions of Theorem~\ref{thm:general-active},
there is a universal constant $c_{2} \in [1,\infty)$ such that,
for any $k \in \{0,1,\ldots,\log_{2}(M/2)\}$,
with probability at least $1-2\sqrt{\vc \change}$, if 
$\target_{t+1} \in V_{k}$,
then $\target_{t+1} \in V_{k+1}$ and
$\sup_{h \in V_{k+1}} \Px(x : h(x) \neq \target_{t+1}(x)) \leq c_{2} 
2^{-k} \vc \Log(c_{1} / \sqrt{\vc\change})$.
\end{lemma}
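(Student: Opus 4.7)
The plan is to analyze a single round $k$ of ${\rm Active}(t)$, showing that the threshold $\hat T_k$ in Step~9 is large enough to preserve $\target_{t+1} \in V_{k+1}$ but small enough to eliminate all classifiers in $V_k$ whose error against $\target_{t+1}$ exceeds the claimed bound. The two essentially independent random sources are: (a) how much drift mislabels the samples in round $k$ relative to $\target_{t+1}$, and (b) how empirical disagreement rates on the $2^k$ round-$k$ samples relate to the true probability disagreement. Each will be controlled with probability at least $1-\sqrt{\vc\change}$, so a union bound gives the claimed $1-2\sqrt{\vc\change}$.

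For (a), define $N_k=\#\{s\in(2^k,2^{k+1}]: Y_{t+s}\neq \target_{t+1}(X_{t+s})\}$. Since $X_{t+s}$ is independent of the drift chain, $\P(Y_{t+s}\neq \target_{t+1}(X_{t+s}))\leq (s-1)\change$, so $\E[N_k]\leq 2^{2k+1}\change$. A standard Chernoff upper tail for sums of independent Bernoullis yields $N_k\leq e\cdot 2^{2k+1}\change + \log_2(1/\sqrt{\vc\change})\leq \hat T_k$ with probability at least $1-\sqrt{\vc\change}$. On this event, since $\target_{t+1}\in V_k$ and $\hat h_{k+1}$ minimizes empirical error over $V_k$, the criterion in Step~9 gives $\target_{t+1}\in V_{k+1}$, establishing the first conclusion.

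For (b), take any $h\in V_{k+1}$. By the Step~9 criterion, $\sum_{(x,y)\in Q_k}\ind[h(x)\neq y]\leq \hat T_k+\sum_{(x,y)\in Q_k}\ind[\hat h_{k+1}(x)\neq y]\leq \hat T_k+N_k\leq 2\hat T_k$. The triangle inequality gives $\sum_{(x,y)\in Q_k}\ind[h(x)\neq \target_{t+1}(x)]\leq 3\hat T_k$. Crucially, because $h,\target_{t+1}\in V_k$ they agree off $\DIS(V_k)$, so the sum over $Q_k$ equals the sum over all $2^k$ round-$k$ samples: $\sum_{s=2^k+1}^{2^{k+1}}\ind[h(X_{t+s})\neq \target_{t+1}(X_{t+s})]\leq 3\hat T_k$. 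Applying Lemma~\ref{lem:vc-ratio} to the $2^k$ i.i.d.\ samples, with confidence parameter $\sqrt{\vc\change}$, yields
\[
\Px(x: h(x)\neq \target_{t+1}(x))\;\leq\; O\!\left(\frac{\hat T_k}{2^k}+\frac{\vc\Log(2^k/\vc)+\Log(1/\sqrt{\vc\change})}{2^k}\right).
\]

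The final step is arithmetic simplification. Expanding $\hat T_k/2^k = \log_2(1/\sqrt{\vc\change})/2^k + 2^{k+2}e\change$, the troublesome term is $2^k\change$; but because $M=\lceil c_1\sqrt{\vc/\change}\rceil_2$ forces $2^{2k}\leq M^2=O(\vc/\change)\leq O(\vc\Log(c_1/\sqrt{\vc\change})/\change)$, one has $2^k\change\leq O(\vc\Log(c_1/\sqrt{\vc\change})/2^k)$. Similarly $\vc\Log(2^k/\vc)\leq O(\vc\Log(c_1/\sqrt{\vc\change}))$ since $2^k\leq M$. Combining all terms gives the stated bound $c_2 2^{-k}\vc\Log(c_1/\sqrt{\vc\change})$. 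The main obstacle is precisely this last calibration: the value of $M$ is chosen so that the drift-tolerance term $2^k\change$ in $\hat T_k$ and the VC-complexity term $\vc\Log/2^k$ are simultaneously dominated by the claimed bound across all rounds $k\leq \log_2(M/2)$, and verifying this requires the explicit choice $M\asymp\sqrt{\vc/\change}$.
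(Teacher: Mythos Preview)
Your proposal is correct and follows essentially the same route as the paper's proof: a Chernoff bound on the drift-induced mislabels $N_k$ (probability $1-\sqrt{\vc\change}$) to show $\target_{t+1}\in V_{k+1}$, the triangle-inequality argument giving $\sum_s\ind[h(X_{t+s})\neq\target_{t+1}(X_{t+s})]\leq 3\hat T_k$ for all $h\in V_{k+1}$, an application of Lemma~\ref{lem:vc-ratio} at confidence $\sqrt{\vc\change}$, and the final arithmetic using $2^k\leq M/2\leq c_1\sqrt{\vc/\change}$ to absorb the $2^k\change$ term into the $2^{-k}\vc\Log$ bound. The only cosmetic difference is that the paper bounds the minimizer's error via $\sum_{(x,y)\in Q_k}\ind[\target_{t+1}(x)\neq y]$ directly rather than through $N_k$, but this yields the same $3\hat T_k$ conclusion.
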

\begin{proof}
By a Chernoff bound, with probability at least $1-\sqrt{\vc \change}$,
\begin{equation*}
\sum_{s=2^{k}+1}^{2^{k+1}} \ind[\target_{t+1}(X_{s}) \neq Y_{s}]
\leq \log_{2}(1/\sqrt{\vc \change}) + 2^{2k+2} e \change
= \hat{T}_{k}.
\end{equation*} 
Therefore, if $\target_{t+1} \in V_{k}$, then since every $g \in V_{k}$
agrees with $\target_{t+1}$ on those points $X_{s} \notin \DIS(V_{k})$, 
in the update in Step 9 defining $V_{k+1}$,
we have 
\begin{align*}
& \sum_{(x,y) \in Q_{k}} \ind[\target_{t+1}(x) \neq y] - \ind[\hat{h}_{k+1}(x) \neq y]
\\ & = \sum_{s=2^{k}+1}^{2^{k+1}} \ind[\target_{t+1}(X_{s}) \neq Y_{s}]
- \min_{g \in V_{k}} \sum_{s=2^{k}+1}^{2^{k+1}} \ind[g(X_{s}) \neq Y_{s}]
\\ & \leq \sum_{s=2^{k}+1}^{2^{k+1}} \ind[\target_{t+1}(X_{s}) \neq Y_{s}] \leq \hat{T}_{k},
\end{align*}
so that $\target_{t+1} \in V_{k+1}$ as well.

Furthermore, if $\target_{t+1} \in V_{k}$, 
then by the definition of $V_{k+1}$, 
we know every $h \in V_{k+1}$ has
\begin{equation*}
\sum_{s=2^{k}+1}^{2^{k+1}} \ind[ h(X_{s}) \neq Y_{s} ] 
\leq \hat{T}_{k} + \sum_{s=2^{k}+1}^{2^{k+1}} \ind[ \target_{t+1}(X_{s}) \neq Y_{s} ], 
\end{equation*}
so that a triangle inequality implies
\begin{align*}
\sum_{s=2^{k}+1}^{2^{k+1}} \ind[ h(X_{s}) \neq \target_{t+1}(X_{s}) ]
& \leq 
\sum_{s=2^{k}+1}^{2^{k+1}} \ind[ h(X_{s}) \neq Y_{s} ]
+ \ind[ \target_{t+1}(X_{s}) \neq Y_{s} ]
\\ & \leq 
\hat{T}_{k} + 2 \sum_{s=2^{k}+1}^{2^{k+1}} \ind[ \target_{t+1}(X_{s}) \neq Y_{s} ]
\leq 3 \hat{T}_{k}.
\end{align*}
Lemma~\ref{lem:vc-ratio} then implies that, on an additional event of 
probability at least $1-\sqrt{\vc \change}$,
every $h \in V_{k+1}$ has
\begin{align*}
& \Px(x : h(x) \neq \target_{t+1}(x))
\\ & \leq 2^{-k} 3\hat{T}_{k} + c 2^{-k} \sqrt{3\hat{T}_{k} (\vc \Log(2^{k}/\vc)+\Log(1/\sqrt{\vc\change}))} 
\\ & \phantom{\leq } + c 2^{-k} (\vc \Log(2^{k}/\vc) + \Log(1/\sqrt{\vc\change}))
\\ & \leq
2^{-k} 3 \log_{2}(1/\sqrt{\vc\change})
+ 2^{k} 12 e \change
+ c 2^{-k} \sqrt{ 6 \log_{2}(1/\sqrt{\vc\change}) \vc \Log(c_{1} / \sqrt{\vc\change})}
\\ & \phantom{\leq } + c 2^{-k} \sqrt{ 2^{2k} 24 e \change \vc \Log(c_{1} / \sqrt{\vc\change}) }
+ 2 c 2^{-k} \vc \Log(c_{1} / \sqrt{\vc\change})
\\ & 
\leq 
2^{-k} 3 \log_{2}(1/\sqrt{\vc\change})
+ 12 e c_{1} \sqrt{\vc\change}
+ 3 c 2^{-k} \sqrt{ \vc } \Log(c_{1} / \sqrt{\vc\change})
\\ & \phantom{\leq } + \sqrt{24 e} c \sqrt{\vc \change \Log(c_{1} / \sqrt{\vc\change}) }
+ 2 c 2^{-k} \vc \Log(c_{1} / \sqrt{\vc\change}),
\end{align*}
where $c$ is as in Lemma~\ref{lem:vc-ratio}.
Since $\sqrt{\vc \change} \leq 2 c_{1} \vc / M \leq c_{1} \vc 2^{-k}$,
this is at most
\begin{equation*}
\left(5 + 12 e c_{1}^{2} + 3 c + \sqrt{24 e} c c_{1} + 2 c\right)
2^{-k} \vc \Log(c_{1} / \sqrt{\vc\change}).
\end{equation*}
Letting $c_{2} = 5 + 12 e c_{1}^{2} + 3 c + \sqrt{24 e} c c_{1} + 2 c$, 
we have the result by a union bound.
\qed
\end{proof}

We are now ready for the proof of Theorem~\ref{thm:general-active}.

\begin{proof}[Proof of Theorem~\ref{thm:general-active}]
Fix any $i \in \nats$, and consider running ${\rm Active}(M(i-1))$.
Since $\target_{M(i-1)+1} \in \C$,
by Lemma~\ref{lem:active-subroutine}, a union bound, and induction, 
with probability at least $1-2\sqrt{\vc\change} \log_{2}(M/2)
\geq 1 - 2 \sqrt{\vc\change} \log_{2}(c_{1}\sqrt{\vc/\change})$,
every $k \in \{0,1,\ldots,\log_{2}(M/2)\}$ has 
\begin{equation}
\label{eqn:general-active-radius}
\sup_{h \in V_{k}} \Px(x : h(x) \neq \target_{M(i-1)+1}(x)) \leq 
c_{2} 2^{1-k} \vc \Log(c_{1} / \sqrt{\vc\change}).
\end{equation}
Thus, since $\hat{h}_{k} \in V_{k}$ for each $k$, 
the expected number of mistakes among the predictions 
$\hat{Y}_{M(i-1)+1},\ldots,\hat{Y}_{M i}$
is 

\begin{align*}
& 1 + \sum_{k=0}^{\log_{2}(M/2)} \sum_{s=2^{k}+1}^{2^{k+1}} \P(\hat{h}_{k}(X_{M(i-1)+s}) \neq Y_{M(i-1)+s})
\\ & \leq 1 + \sum_{k=0}^{\log_{2}(M/2)} \sum_{s=2^{k}+1}^{2^{k+1}}
\P(\target_{M(i-1)+1}(X_{M(i-1)+s}) \neq Y_{M(i-1)+s}) 
\\ & \phantom{\leq } + \sum_{k=0}^{\log_{2}(M/2)} \sum_{s=2^{k}+1}^{2^{k+1}} \P(\hat{h}_{k}(X_{M(i-1)+s}) \neq \target_{M(i-1)+1}(X_{M(i-1)+s}))
\\ & \leq 
1 + \change M^{2} +
\sum_{k=0}^{\log_{2}(M/2)} 2^{k} \left( c_{2} 2^{1-k} \vc \Log(c_{1} / \sqrt{\vc\change}) + 2\sqrt{\vc\change}\log_{2}(M/2)\right)
\\ & \leq
1 + 4 c_{1}^{2} \vc + 2 c_{2} \vc \Log(c_{1} / \sqrt{\vc\change}) \log_{2}(2 c_{1} \sqrt{\vc/\change})
+ 4c_{1} \vc \log_{2}(c_{1} \sqrt{\vc/\change})
\\ & = 
O\left( \vc \Log(\vc/\change) \Log(1/(\vc\change)) \right).
\end{align*}
Furthermore, \eqref{eqn:general-active-radius} implies the algorithm only 
requests the label $Y_{M(i-1)+s}$ for $s \in \{2^{k}+1,\ldots,2^{k+1}\}$
if $X_{M(i-1)+s} \in \DIS(\Ball(\target_{M(i-1)+1}, c_{2} 2^{1-k} \vc \Log(c_{1} / \sqrt{\vc\change})))$,
so that the expected number of labels requested among $Y_{M(i-1)+1},\ldots,Y_{M i}$ is at most
\begin{align*}
& 1 + \sum_{k=0}^{\log_{2}(M/2)} 2^{k} \left(\E[ \Px(\DIS(\Ball(\target_{M(i-1)+1}, c_{2} 2^{1-k} \vc \Log(c_{1}/\sqrt{\vc\change}))))] \right. 
\\ & {\hskip 6cm}\left.+ 2 \sqrt{\vc\change} \log_{2}(c_{1}\sqrt{\vc/\change})\right)
\\ & \leq 
1 + \dc_{\C}\left(4 c_{2} \vc \Log(c_{1}/\sqrt{\vc\change}) / M\right) 2 c_{2} \vc \Log(c_{2}/\sqrt{\vc\change}) \log_{2}(2 c_{1} \sqrt{\vc/\change})
\\ & {\hskip 6cm}+  4 c_{1} \vc \log_{2}(c_{1}\sqrt{\vc/\change})
\\ & = 
O\left( \dc_{\C}\left( \sqrt{\vc\change} \Log(1/(\vc\change)) \right) \vc \Log(\vc/\change) \Log(1/(\vc\change)) \right).
\end{align*}
Thus, the expected number of mistakes among indices $1,\ldots,T$ is at most
\begin{equation*}
O\left( \vc \Log(\vc/\change) \Log(1/(\vc\change)) \left\lceil \frac{T}{M} \right\rceil \right)
= O\left( \sqrt{\vc\change} \Log(\vc/\change) \Log(1/(\vc\change)) T \right),
\end{equation*}
and the expected number of labels requested among indices $1,\ldots,T$ is at most
\begin{multline*}
O\left( \dc_{\C}\left( \sqrt{\vc\change} \Log(1/(\vc\change)) \right) \vc \Log(\vc/\change) \Log(1/(\vc\change)) \left\lceil \frac{T}{M} \right\rceil \right)
\\ = O\left( \dc_{\C}\left( \sqrt{\vc\change} \Log(1/(\vc\change)) \right) \sqrt{\vc\change} \Log(\vc/\change) \Log(1/(\vc\change)) T \right).
\end{multline*}
\qed
\end{proof}

\bibliographystyle{alpha}
\bibliography{bib-concept-drift}

\end{document}